  \providecommand\BibTeX{{%
    \normalfont B\kern-0.5em{\scshape i\kern-0.25em b}\kern-0.8em\TeX}}}
\newenvironment{slremark}
    {\begin{remark}\rm}
    {\hfill $\triangleleft$ \end{remark}}
\newenvironment{slexample}
    {\begin{example}}
    {\hfill $\triangleleft$ \end{example}}
\newenvironment{claimproof}
    {\begin{proof}} 
    {\end{proof}}
\newtheorem{claim}{Claim}
\newtheorem{remark}{Remark}
\theoremstyle{definition}
\newcommand{\zerovec}{{\vr 0}}
\newcommand{\symg}[1]{\text{\sc S}_{#1}}
\newcommand{\faktor}[2]{#1/#2}
\newcommand{\A}{\text{\sc Atoms}}
\newcommand{\basisof}[1]{\widehat{#1}}
\newcommand{\Q}{\mathbb{K}}
\newcommand{\Rat}{\mathbb{Q}}
\newcommand{\Nat}{\mathbb{N}}
\newcommand{\Int}{\mathbb{Z}}
\newcommand{\supp}[1]{\text{sup}(#1)}
\newcommand{\dimm}[2]{#1\text{-dim}(#2)}
\newcommand{\dom}[1]{\text{dom}(#1)}
\newcommand{\odom}[1]{S\text{-orbit-dom}(#1)}
\newcommand{\toremove}[1]{#1} 
\newcommand{\prob}[3]{
\begin{quote}
{\sc #1:}
\begin{description}
\item[Input:] #2.
\item[Question:] #3?
\end{description}
\end{quote}
}
\newcommand{\Lin}[1]{\text{\sc Fin-Lin}(#1)}
\newcommand{\GLin}[1]{\text{\sc Lin}(#1)}
\newcommand{\Span}[2]{\text{\sc Fin-Span}_{}(#2)}
\newcommand{\GSpan}[2]{\text{\sc Span}_{}(#2)}
\newcommand{\setof}[2]{\left\{\, #1 \; \middle| \; #2 \, \right\}}
\newcommand{\setofbeg}[2]{\{\, #1 \; | \; #2}
\newcommand{\setofend}[1]{\,\, #1 \, \}}
\newcommand{\set}[1]{\left\{ #1 \right\}}
\newcommand{\para}[1]{\subsubsection*{\bf #1.}}
\newcommand{\Aut}[1]{\text{\sc Aut}_{#1}}
\newcommand{\size}[1]{|#1|}
\newcommand{\spread}[1]{\widetilde{#1}}
\newcommand{\innerprod}[2]{#1 \cdot #2 }
\newcommand{\subseteqfin}{\subseteq_\text{fin}}
\newcommand{\vr}[1]{\mathbf{#1}}
\newcommand{\otu}[2]{#1^{(#2)}}
\newcommand{\utu}[2]{{#1 \choose #2}}
\newcommand{\constvr}[2]{{\vr {#1}}_{#2}}
\newcommand{\solvname}{{\sc Solv}}
\newcommand{\finsolvname}{{\sc Fin-Solv}}
\newcommand{\belt}{{\vr w}}
\newcommand{\mult}[2]{#1 \cdot #2}
\newcommand{\restr}[2]{{#1}{\restriction}_{#2}}
\newcommand{\quot}[1]{\pi_{#1}}
\newcommand{\mdf}[1]{\overline{#1}}
\newcommand{\gcog}[2]{[#2](#1)}    
\newcommand{\tofs}{\to_\text{fs}}
\newcommand{\tofin}{\to_\text{fin}}
\newcommand{\main}[1]{\widetilde{#1}} 
\newcommand{\comp}[2]{#1_#2}
\newcommand{\Aul}[2]{\A^{#1}_{#2}}
\newcommand{\Deltaw}[1]{\Delta^{\!\!\prec}#1}
\newcommand{\Deltawp}[2]{\Delta^{\!\!\prec_{#1}}#2}
\newcommand{\Ord}{\mathcal{O}}
\newcommand{\TO}[2]{\text{\sc TO}(#1,#2)}
\newcommand{\pow}[2]{\mathcal{P}_{#1}(#2)}
\newcommand{\at}[1]{#1} 
\newcommand{\magicto}{\twoheadrightarrow} 
\renewcommand{\a}{{\at \alpha}}
\renewcommand{\b}{{\at \beta}}
\newcommand{\g}{{\at \gamma}}
\renewcommand{\d}{{\at \delta}}
\renewcommand{\c}{\g}
\newcommand{\PTIME}{\textsc{P}\xspace}
\newcommand{\NP}{\textsc{NP}\xspace}
\newcommand{\ETIME}{\textsc{ExpTime}\xspace}
\newcommand{\ackermann}{\textsc{Ackermann}\xspace}
\begin{document}

\title{Solvability of orbit-finite systems of linear equations}


\author{Arka Ghosh}
\authornote{Partially supported by the  NCN grant 2019/35/B/ST6/02322.}
\orcid{0003-3839-8459}
\affiliation{%
 \institution{University of Warsaw}
 \country{Poland}
}
\author{Piotr Hofman}
\authornote{Partially supported by the NCN grant 2016/21/D/ST6/01368.}
\orcid{0001-9866-3723}
\affiliation{%
 \institution{University of Warsaw}
 \country{Poland}
}
\author{S{\l}awomir Lasota}
\authornote{Partially supported by the ERC Starting grant INFSYS, agreement no.~950398.}
\orcid{0001-8674-4470}
\affiliation{%
 \institution{University of Warsaw}
 \country{Poland}
}




\begin{abstract}
We study orbit-finite systems of linear equations, in the setting of sets with atoms.
Our principal contribution is a decision procedure for solvability of such systems.
The procedure works for every field (and even commutative ring) under mild effectiveness assumptions,
and reduces a given orbit-finite system to a number of finite ones:
exponentially many in general, but polynomially many when the atom dimension of input systems is fixed.
Towards obtaining the procedure we push further the theory of vector spaces generated by orbit-finite sets,
and show that each such vector space admits an orbit-finite basis.
This fundamental property is a key tool in our development, but should be also of  wider interest.
%
\end{abstract}

\begin{CCSXML}
<ccs2012>
   <concept>
       <concept_id>10003752.10003753.10003761</concept_id>
       <concept_desc>Theory of computation~Concurrency</concept_desc>
       <concept_significance>300</concept_significance>
       </concept>
   <concept>
       <concept_id>10003752.10003790.10002990</concept_id>
       <concept_desc>Theory of computation~Logic and verification</concept_desc>
       <concept_significance>500</concept_significance>
       </concept>
   <concept>
       <concept_id>10003752.10003790.10011192</concept_id>
       <concept_desc>Theory of computation~Verification by model checking</concept_desc>
       <concept_significance>500</concept_significance>
       </concept>
 </ccs2012>
\end{CCSXML}

\ccsdesc[300]{Theory of computation~Concurrency}
\ccsdesc[500]{Theory of computation~Logic and verification}
\ccsdesc[500]{Theory of computation~Verification by model checking}

\keywords{linear equations, sets with atoms, orbit-finite sets} 

\maketitle


\section{Introduction}

Applications of linear algebra, and in particular of systems of linear equations,
are ubiquitous in computer science (see e.g.~\cite{Colcombet15,SilvaTC96,Cormenbook}). 
In this paper, motivated by recent and potential future applications to analysis of data-enriched 
models~\cite{DBLP:conf/lics/HofmanLT17,HL18,DBLP:conf/fossacs/GuptaSAH19,BKM21}, 
we augment systems of linear equations with \emph{atoms}~\cite{atombook,Pitts:book}  (also called data values)
thus shifting from finite to orbit-finite systems.
The infinite sets that we study are constructed using atoms
which can only be accessed in a very limited way, namely can only be tested for equality.

Fix a countably infinite set $\A=\set{\at 1, \at 2, \at 3, \ldots}$, 
whose elements are called \emph{atoms}, assuming that the only 
operations on atoms are (dis)equality tests.
%
As an example, consider pairs of distinct atoms
$C=\setof{\a\b\in\A^2}{\a\neq\b}$
 as unknowns (for succinctness, here and in the sequel we write ordered pairs $(\a, \b)$ of atoms as $\a\b$, and likewise
 for triples), 
 and the infinite system of equations
\begin{align*}
\a\b - 2{\cdot} \b\g + \g\a \ = \ 1 \qquad (\a,\b, \g\in\A,
 \a\neq\b\neq \g\neq \a).
\end{align*}
The system is finitely described by the above formula using only (dis)equalities between atoms, 
and therefore is invariant under all permutations of atoms.
Furthermore, up to permutation of atoms the system consists of just one equation -- it is one \emph{orbit};
in the sequel we consider orbit-finite systems (finite unions of orbits).
Each unknown $\a\b\in C$ is determined (\emph{supported}) by 2 atoms (its \emph{atom dimension} is 2) 
while each equation by 3 atoms, therefore the atom dimension
of the whole example system is 3.
The example equations are finite, but need not to be so in general.
Our primary goal is to algorithmically test if such a system has a solution, that is
a rational assignment $\vr x : C\to \Rat$, or maybe an integer assignment $\vr x : C\to\Int$, that satisfies all the equations,
i.e.,
$$
\vr x(\a\b) - 2{\cdot}\vr x(\b\g) + \vr x(\g \a) \ = \ 1
$$
for every $\a\b\c\in \A^3$ such that $\a\neq\b\neq \g\neq \a$.

We use the language of linear algebra.
For instance, a solution is a vector over $C$ (belongs
to the vector space generated by $C$),
and the above system 
may be presented as an infinite matrix plus the infinite right-hand side vector:
\newcommand{\entd}{\!-2\ }
\newcommand{\entj}{\ 1\ }
\newcommand{\entz}{ \ 0\ }
\begin{align*}
& \qquad\ \ \ \,  
\textcolor{gray}{\begin{matrix}
\at 1 \at 2\, & \at 1 \at 3\, & \at 2 \at 3\, & \at 3 \at 4\, & \at 3 \at 1\, & \at 4 \at 1\, & \at 4 \at 2\, & \hdots
\end{matrix}}
\\
& 
\textcolor{gray}{
\begin{matrix}
\at 1 \at 2 \at 3 \\
\at 2 \at 3 \at 4 \\
\at 1 \at 3 \at 4 \\
\at 3 \at 1 \at 2 \\
\vdots
\end{matrix}}
\quad
\textcolor{black}{
\begin{bmatrix}
\ \entj & \entz & \entd & \entz & \entj & \entz & \entz & \hdots \ \  \\
\ \entz & \entz & \entj & \entd & \entz & \entz & \entj & \hdots \ \ \\
\ \entz & \entj & \entz & \entd & \entz & \entj & \entz & \hdots \ \ \\
\ \entd & \entz & \entj & \entz & \entj & \entz & \entz & \hdots \ \  \\
\ \vdots & \vdots & \vdots & \vdots & \vdots & \vdots & \vdots & \ddots \ \ 
\end{bmatrix}
}
\qquad
\textcolor{black}{
\begin{bmatrix}
\ 1 \ \\
\ 1 \ \\
\ 1 \ \\
\ 1 \ \\
\vdots
\end{bmatrix}
}
\end{align*}
The columns of the matrix are indexed by pairs $\a\b\in C$, and rows by triples $\a\b\c\in \A^3$ where $\a\neq\b\neq \g\neq \a$.

\para{Contribution}
As the main contribution, we provide an algorithm for solvability of orbit-finite systems of linear equations.
More formally, our algorithm accepts as input a system consisting of an orbit-finite matrix $\vr A$
and a right-hand side vector $\vr t$,
both \emph{finitely-supported} (i.e., definable using finitely many fixed atoms, hence finitely presentable).
The algorithm checks whether the given system admits a solution which is also finitely-supported (hence
also finitely presentable). 

The coefficients in $\vr A$ and $\vr t$, as well as in the solutions, are assumed to come from an arbitrary fixed 
commutative ring $(\Q, 0, 1, +, \cdot)$ 
which is assumed to be \emph{effective}:
its elements are finitely representable; equality is decidable for these representations;
ring operations (addition, subtraction, multiplication) are computable using the representations;
and solvability of finite systems over $\Q$ is decidable.
Examples abound: the rational field $\Rat$; the integer ring $\Int$; finite commutative rings;
the field of algebraic numbers; the field of complex numbers.

In brief, the algorithm computes a number of finite systems of linear equations over $\Q$
and answers positively exactly when all these systems are solvable.
The number of finite systems and their sizes are exponential in general;
however, once the atom dimension of the input system is fixed, the algorithm computes only polynomially many
finite systems of polynomial size.
In particular, for fixed atom dimension we obtain polynomial time procedures
for solvability over $\Rat$ or $\Int$. 
On the way we also provide an algorithm for \emph{finitary} solvability where one only seeks solutions which
assign zero to almost all unknowns.

On the mathematical level, we
push further the theory of orbit-finitely generated vector spaces initiated in~\cite{BKM21},
in order to obtain a key tool for our algorithmic considerations:
we show that each orbit-finitely generated vector space admits an orbit-finite basis.
We believe that this finding is of independent wider interest.

\para{Outline}
After preliminaries on sets with atoms, 
in Section~\ref{sec:spaces}
we introduce orbit-finitely generated vector spaces and  Orbit-finite Basis Theorem, and
in Section~\ref{sec: linear eq} 
we introduce orbit-finite systems of linear equations and formulate the main result. 
The remaining sections contain the proofs.
Some missing parts thereof are to be found in the full version~\cite{fullver}.

\para{Motivations}
The main motivation for this work comes from past and potential future applications in analysis of computation models 
enriched with data, including different kinds of automata over infinite alphabets~\cite{NSV04,KF94,lmcs14}. 
For example, while studying Parikh images~\cite{HJLP21} of register automata~\cite{KF94} or 
register context-free grammars~\cite{lmcs14,atombook,CL15-csl}, one works with nonnegative integer vectors of the form
$\Sigma\to\Nat$, where $\Sigma$ is an infinite alphabet.
Another potential application of orbit-finite systems of linear equations 
is the recently proposed algorithm for equivalence of 
weighted register automata, including unambiguous register automata~\cite{BKM21}.

Numerous applications arise in
data-enriched Petri nets \cite{Nets-with-Tokens-which-Carry-Data,Lasota16} 
(or vector addition systems~\cite{Coverability-Trees-for-Petri-Nets-with-Unordered-Data}), 
an extension of classical Petri nets~\cite{SilvaTC96} 
where tokens carry atoms (data values) that are compared by transitions.
In case when tokens are restricted to carry single data values (atom dimension 1)
one obtains a well structured transition system and
hence standard decision problems like coverability or boundedness are 
decidable~\cite{Nets-with-Tokens-which-Carry-Data,Coverability-Trees-for-Petri-Nets-with-Unordered-Data,What-Makes-Petri-Nets-Harder-to-Verify-Stack-or-Data,Lasota16}. 
Status of the reachability problem is unknown;
since integer linear equations form a crucial component in a decision procedure for reachability of classical Petri
nets~\cite{Mayr81,Kosaraju82,Lambert92,LerouxS15}, 
lifting the procedure to data-enriched setting would require solving orbit-finite systems of integer linear equations.
In case when tokens may carry tuples of atoms (arbitrary atom dimension) all the standard problems are undecidable~\cite{Lasota16}. 
Decidability may be regained by resorting to relaxations: 
continuous semantics~\cite{DBLP:conf/fossacs/GuptaSAH19} allowing for fractional executions of transitions, 
or so-called integer semantics~\cite{DBLP:conf/lics/HofmanLT17} dropping non-negativeness restriction on configurations.
Both these results have been obtained by reduction to solving certain systems of linear equations. 
%

\para{State of the art}
Our results generalise, or are closely related to, some earlier 
partial results~\cite{DBLP:conf/lics/HofmanLT17,DBLP:conf/lics/KlinKOT15,HR21}.

Systems of linear equations in~\cite{DBLP:conf/lics/HofmanLT17} have row indexes of atom dimension 1 
in which case finitary solvability is in \PTIME over $\Int$ or $\Rat$, and in \NP over $\Nat$.
In a more general but still restricted case studied in~\cite{HR21}, where in particular 
all row indexes are assumed to have the same atom dimension, 
finitary solvability is still in \PTIME
over $\Int$ or $\Rat$, but in \ETIME over $\Nat$, both for fixed atom dimension.
Columns of a matrix are assumed to be finitary in~\cite{DBLP:conf/lics/HofmanLT17,HR21}.
Systems in another related work~\cite{DBLP:conf/lics/KlinKOT15} are over a finite field, contain only 
finite equations, and are studied as a special case of orbit-finite constraint satisfaction problems;
furthermore, solutions sought are not restricted to be finitely-supported.

Additionally, the work~\cite{HL18} investigates system of linear equations, in atom dimension 1,
 over ordered atoms:
solvability is in \PTIME over $\Int$ or $\Rat$, but equivalent to VAS reachability  
(and hence \ackermann-complete~\cite{CO,L,Lasota22}) over $\Nat$.

Our Orbit-Finite Basis Theorem is a follow-up and strengthening of Theorem VI.4 in~\cite{BKM21}:
each orbit-finitely generated vector space has an orbit-finite \emph{spanning} set.
  


\section{Preliminaries on sets with atoms} \label{sec:atoms}

%
Our definitions rely on basic notions and results of the theory of \emph{sets with atoms}~\cite{atombook}, 
also known as nominal sets~\cite{Pitts:book}. 
We only work with \emph{equality atoms} which have no additional structure except for
the equality.

We fix a countably infinite set $\A=\set{\at 1, \at 2, \at 3, \ldots}$, whose elements we call \emph{atoms}.
We reserve Greek letters $\a,\b,\g,  \ldots$ to range over atoms.
Informally speaking, a set with atoms is a set that can have atoms, or other sets with atoms,
as elements.
Formally, we define the universe of sets with atoms by a suitably adapted cumulative hierarchy of sets,
by transfinite induction: 
the only set of \emph{rank} 0 is the empty set; and for a cardinal $i$, a set of rank $i$ may contain, as elements, 
sets of rank smaller than $i$ as well as atoms.
In particular, nonempty subsets $X\subseteq\A$ have rank 1.

The group $\Aut{}$ of all permutations of $\A$, called in this paper \emph{atom automorphisms},
acts on sets with atoms by consistently renaming all atoms in a given set. 
Formally, by another transfinite induction, for $\pi\in\Aut{}$ we define $\pi(X) = \setof{\pi(x)}{x\in X}$.
Via standard set-theoretic encodings of pairs or finite sequences we obtain, in particular, 
the pointwise action on pairs $\pi (x,y)=(\pi(x),\pi(y))$,
and likewise on finite sequences.  
Relations and functions from $X$ to $Y$ are considered as subsets of $X\times Y$.

We restrict to sets with atoms that only depend on finitely many atoms, in the following sense. 
For $S\subseteq \A$, let $\Aut S = \setof{\pi\in\Aut{}}{\pi(\a) = \a \text{ for every } \a \in S}$ be the set of atom automorphisms
that \emph{fix} $S$.
We call elements of $\Aut{S}$ \emph{$S$-atom automorphisms}.
A \emph{support} of $x$ is any finite set $S\subseteqfin\A$ (we use the symbol $\subseteqfin$ for finite subsets)
such that  for all
$\pi\in\Aut S$ it holds $\pi(x)=x$.
In this case we also say that $x$ is \emph{$S$-supported}.
As a special case, a function $f$ is supported by $S$ if 
$f(\pi(x)) = \pi(f(x))$ for every argument $x$ and $\pi\in\Aut{S}$.
An $S$-supported set is also $S'$-supported, as long as $S\subseteq S'$.
An element (or set) $x$ is \emph{finitely supported} if it has some finite support; in this case
$x$ has {\em the least support}, denoted $\supp x$, called \emph{the support} of $x$ 
(cf.~\cite[Sect.~6]{atombook}).
Sets supported by $\emptyset$ we call \emph{equivariant}.

For instance, given $\a,\b\in\A$, the support of the set $\A\setminus\set{\a,\b}$ is $\set{\a,\b}$;
in general, a set is $S$-supported if and only if it is invariant under all $S$-atom automorphisms.
The set $\A^2$ and the projection function $\pi_1 : \A^2 \to \A : (\a, \b) \mapsto \a$
are both equivariant; 
and the support of a tuple $(\a_1, \ldots, \a_n)\in\A^n$, encoded as a set in a standard way, is
the set of atoms $\set{\a_1, \ldots, \a_n}$ appearing in it.

From now on, we shall only consider sets  that are hereditarily finitely
supported, 
i.e., ones that have a finite support, whose every element has some finite support,
and so on.

\vspace{-0.8mm}

\para{Orbit-finite sets} 

Let $S\subseteqfin\A$. Two atoms or sets with atoms $x, y$ 
are \emph{in the same $S$-orbit} if $\pi(x) = y$ for some $\pi\in\Aut S$.
This equivalence relation splits all atoms and sets with atoms
into equivalence classes, which we call \emph{$S$-orbits};
$\emptyset$-orbits we call equivariant orbits.
By the very definition, every $S$-orbit $O$ is $S$-supported: $\supp O \subseteq S$ and,
even if the inclusion is strict (which may happen only for singleton orbits), $O$ is also a $\supp O$-orbit.
When the set $S$ is irrelevant, we simply speak of an \emph{orbit}, meaning an $S$-orbit for some $S\subseteqfin \A$.

Every $S$-supported set is a union of (necessarily disjoint) $S$-orbits; the set is
\emph{orbit-finite} if this union is finite.
Orbit-finiteness is stable under orbit-refinement: if $S\subseteq S'$, a finite union of $S$-orbits is also a finite union
of $S'$-orbits (but the number of orbits may increase). 
Examples of orbit-finite sets are: $\A$ (1 orbit); 
$\A - \set{\a}$ for some $\a\in\A$ (1 orbit);
$\A^2$ (2 orbits: diagonal and non-diagonal);
$\A^3$ (5 orbits, corresponding to equality types of triples);
non-repeating $n$-tuples of atoms (1 orbit)
\[
\otu \A n = \setof{\!(\a_1,\ldots,\a_n)\in\A^n\!}{\a_i\neq\a_j \text{ for all }i\neq j}\!;
\]
$n$-sets of atoms $\utu \A n = \setof{X\subseteq \A}{\size X = n}$ (1 orbit).

The set  $\pow {\text{fin}} \A$ of all finite subsets of atoms is orbit-infinite as cardinality is an invariant of each orbit.

\para{Orbit representation}

For a positive integer $k>0$, denote by $\symg k$ the symmetric group on $\set{1, \ldots, k}$.
Given a subgroup $G\leq \symg k$ of the symmetric group $\symg k$, 
we denote by $\faktor{\otu \A k}{G}$ 
the set of non-repeating $k$-tuples of atom modulo coordinate permutations from the group $G$.
More formally, we define an equivalence in $\otu \A k$, where a tuple $a = (\a_1, \ldots, \a_k)\in \otu \A k$ is equivalent
to every tuple $a \circ \sigma := (\a_{\sigma(1)}, \ldots, \a_{\sigma(k)})$, where $\sigma \in G$.
The equivalence classes are thus finite.
Then we define a canonical quotient 
$\quot G : \otu \A k \to \faktor{\otu \A k}{G}$ 
mapping a tuple $a\in \otu \A k$ to its equivalence class.

\begin{slexample}
Let $k = 3$ and $G \leq \symg 3$ be generated by the cyclic shift $\sigma$ to the right:
$\sigma \ = \ $
{\tiny $
\begin{pmatrix}
 1 & 2 & 3 \\
 2 & 3 & 1
 \end{pmatrix}
$}.
The quotient $\quot G : \otu \A 3 \to \faktor{\otu \A 3}{G}$ maps each triple $(\a,\b,\g)$ to 
$\set{(\a,\b,\g), (\g,\a,\b), (\b, \g, \a)}$.
\end{slexample}

\begin{lemma}[\cite{atombook}, Thm.~6.3] \label{lem:repr}
Every equivariant orbit is in equivariant bijection with
$\faktor{\A^{(k)}}{G}$ for some $k \in \Nat$ and some subgroup $G\leq \symg k$.
\end{lemma}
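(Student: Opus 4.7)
The plan is to realise the bijection by fixing a representative $x_0 \in O$ and encoding every other element of $O$ via the way an atom automorphism moves the support of $x_0$. Since $x_0$ is hereditarily finitely supported, I set $k = \size{\supp{x_0}}$ and enumerate the support as a non-repeating tuple $\bar\a = (\a_1, \ldots, \a_k) \in \otu{\A}{k}$. By definition of equivariant orbit, every $y \in O$ arises as $y = \pi(x_0)$ for some $\pi \in \Aut{}$, so I have a surjection $\Aut{} \twoheadrightarrow O$.

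The central step is analysing which $\pi$ satisfy $\pi(x_0) = x_0$. Using the basic nominal identity $\supp{\pi(x_0)} = \pi(\supp{x_0})$, any such $\pi$ permutes $\supp{x_0}$ and thereby induces a permutation $\sigma \in \symg{k}$ defined by $\pi(\a_i) = \a_{\sigma(i)}$. Let $G$ collect all such induced $\sigma$. The stabiliser of $x_0$ in $\Aut{}$ is a subgroup, and the induction map from automorphisms permuting $\supp{x_0}$ into $\symg{k}$ is a group homomorphism, so $G$ is a subgroup of $\symg{k}$. Conversely, two automorphisms that induce the same $\sigma$ agree on $\supp{x_0}$, hence differ by an automorphism fixing $\supp{x_0}$ pointwise, which preserves $x_0$ by the support property; so whether $\pi(x_0) = x_0$ depends only on the induced permutation, namely on whether it lies in $G$.

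I then define $f : O \to \faktor{\otu{\A}{k}}{G}$ by $f(\pi(x_0)) = \quot{G}(\pi(\bar\a))$. Both well-definedness and injectivity reduce to the previous step: $\pi(x_0) = \pi'(x_0)$ iff $\pi^{-1}\pi'$ stabilises $x_0$ iff $\pi'(\bar\a) = \pi(\bar\a) \circ \sigma$ for some $\sigma \in G$, which is exactly the $G$-equivalence on $\otu{\A}{k}$. Surjectivity uses that $\Aut{}$ acts transitively on $\otu{\A}{k}$: any $\bar\b \in \otu{\A}{k}$ is reached from $\bar\a$ by some $\pi$, and then $f(\pi(x_0)) = \quot{G}(\bar\b)$. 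Equivariance of $f$ is immediate from the equivariance of $\quot{G}$ (the $G$-equivalence is defined without reference to any specific atom), since $f(\tau(\pi(x_0))) = \quot{G}(\tau\pi(\bar\a)) = \tau(f(\pi(x_0)))$ for any $\tau \in \Aut{}$.

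The main obstacle is the stabiliser analysis: arguing cleanly, in an atom-free way, both that the induced permutations form a subgroup $G \leq \symg{k}$ and that membership in $G$ is the sole obstruction to two automorphisms producing the same element of $O$. Two minor subtleties to keep in mind are the degenerate case $k = 0$, where $x_0$ is itself equivariant, $O = \set{x_0}$, and the target $\faktor{\otu{\A}{0}}{G}$ is also a singleton, so the statement holds trivially; and the fact that my arbitrary enumeration of $\supp{x_0}$ conjugates $G$ inside $\symg{k}$, so $G$ is determined only up to this conjugation, matching the ``for some $G$'' in the statement.
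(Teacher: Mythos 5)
The paper does not prove this lemma itself --- it imports it verbatim from the cited reference --- but your argument is correct and is exactly the standard orbit--stabiliser proof of that theorem: the surjection $\pi \mapsto \pi(x_0)$ from $\Aut{}$ onto $O$, the observation that automorphisms agreeing on $\supp{x_0}$ send $x_0$ to the same element, and the identification of the remaining ambiguity with the subgroup $G \leq \symg k$ induced by the stabiliser of $x_0$ acting on its support. All the delicate points (well-definedness and injectivity of $f$ reducing to the stabiliser analysis, surjectivity via transitivity on $\otu \A k$, equivariance of $\quot G$, the case $k=0$, and $G$ being determined only up to conjugation) are handled correctly, so nothing is missing.
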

%
%


\section{Orbit-Finite Basis Theorem}\label{sec:spaces}


\para{Proviso}
Throughout the paper we fix a countable commutative ring $(\Q, 0, 1, +, \cdot)$ with multiplicative unit $1$,
and assume that the ring is \emph{effective}:
its elements are finitely representable and solvability of finite systems of linear equations is decidable.
As a direct consequence, equality is decidable for the element representations,
and the ring operations (addition, subtraction, multiplication) are  
computable using the representations.
The most prominent examples are rationals $\Rat$ and integers $\Int$.


\para{Vectors}

We are investigating vector spaces%
\footnote{
Formally, in case when $\Q$ is not a field, we should use the term \emph{module}.
Since modules/vector spaces studied in this paper are of particularly simple kind,
we prefer to stick to a widely known term \emph{vector space}.
} 
generated by an orbit-finite set.
Let $B$ be a fixed orbit-finite set.
\begin{definition}
By a \emph{vector} over $B$ we mean any finitely-supported function $\vr v$ from $B$ to $\Q$, 
written $\vr v : B\tofs\Q$ (vectors are written using boldface).
\end{definition}
The set of all vectors over $B$ we denote by $\GLin B = B \tofs \Q$. 
It is a vector space, 
with pointwise addition and scalar multiplication:
for $\vr v, \vr v'\in\GLin B$, $b\in B$ and $q\in \Q$, we have
$(\vr v+ \vr v')(b) = \vr v(b) + \vr v'(b)$ and $(q \cdot \vr v)(b) = q\cdot \vr v(b)$.
The space $\GLin B$ may be considered as  the vector space \emph{generated} by $B$,
and $B$ as its \emph{dimension}%
\footnote{
Not to be confused with \emph{atom dimension} introduced in Section~\ref{sec:bases}.
}.
We define the \emph{domain} of a vector $\vr v\in\GLin B$ as $\dom {\vr v} = \setof{b\in B}{\vr v(b)\neq 0}$.
%
%
A vector $\vr v$ over $B$ is \emph{finitary}, written $\vr v : B\tofin \Q$, 
if $\vr v(b) = 0$ for all except finitely many $b\in B$
(i.e., $\dom {\vr v}$ is finite).
A finitary vector $\vr v$ with domain $\dom {\vr v} = \set{b_1, \ldots, b_k}$ such that
$\vr v(b_1) = q_1, \ldots, \vr v(b_k) = q_k$, 
may be identified with a formal linear combination of elements 
of $B$:
\begin{align} \label{eq:formalsum}
\vr v \ = \ q_1 \cdot b_1 + \ldots + q_k \cdot b_k.  
\end{align}
The subspace of $\GLin B$ consisting of all finitary vectors we denote by $\Lin B = B\tofin\Q$.
For finite $B$ of size $\size B = n$, $\GLin B = \Lin B$ is isomorphic to $\Q^{n}$.
%

For a subset $X\subseteq B$, we denote by $\constvr 1 X\in\GLin B$ the characteristic function of $X$,
i.e., the vector that maps each element of  $X$ to $1$ and all elements of $B\setminus X$ to $0$:
\[
\constvr 1 X : b \mapsto \begin{cases}
1 & \text{ if } b \in X \\
0 & \text{ otherwise.}
\end{cases} 
\]
We write $\constvr 1 b$ instead of $\constvr 1 {\set{b}}$, and $\vr 1$ instead of $\constvr 1 B$.
We sometimes want to treat $B$ itself as a subset of $\Lin B$, identifying every $b\in B$ with
the vector $\constvr 1 b$, or equivalently with
the trivial linear combination $1\cdot b$ as in~\eqref{eq:formalsum}.
%


\begin{lemma} \label{lem:1O}
Consider $S\subseteqfin \A$ and an $S$-supported $\vr v \in \GLin {B}$. Then
\begin{enumerate}
\item[(i)] $\vr v$ is constant, restricted to every $S$-orbit $O\subseteq B$;
\item[(ii)] $\vr v$ is a linear combination of characteristic vectors $\constvr 1 O$ of $S$-orbits $O\subseteq B$.
\end{enumerate}
\end{lemma}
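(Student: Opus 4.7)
The plan is to derive (i) directly from the definition of support of a function, and then obtain (ii) as an almost immediate consequence together with the orbit-refinement property stated in the preliminaries.

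For (i), I would unfold what it means for the function $\vr v : B \to \Q$ to be $S$-supported: by the definition recalled in Section~\ref{sec:atoms}, $\vr v(\pi(b)) = \pi(\vr v(b))$ for every $b \in B$ and every $\pi \in \Aut S$. Since $\vr v(b)$ is a scalar of $\Q$ and contains no atoms, $\pi$ fixes it, so the identity simplifies to $\vr v(\pi(b)) = \vr v(b)$. Now if $b_1, b_2 \in B$ belong to the same $S$-orbit $O$, I pick any $\pi \in \Aut S$ with $\pi(b_1) = b_2$; this immediately yields $\vr v(b_1) = \vr v(b_2)$, so $\vr v$ is constant on $O$.

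For (ii), I would first observe that $S$-supportedness of $\vr v$ forces its domain $B$ to be $S$-supported too: by uniqueness of the least support, $\supp B \subseteq \supp{\vr v} \subseteq S$. Since $B$ is orbit-finite by the standing hypothesis and $\supp B \subseteq S$, the orbit-refinement stability recalled in Section~\ref{sec:atoms} gives that $B$ is a finite disjoint union of $S$-orbits $O_1, \ldots, O_m$. Using (i), let $q_i \in \Q$ be the common value $\vr v(b)$ for $b \in O_i$. Comparing both sides pointwise on each $O_i$ yields
\[
\vr v \ = \ \sum_{i=1}^m q_i \cdot \constvr 1 {O_i},
\]
which is the desired linear combination.

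The argument is essentially mechanical once the definitions are unwound; the only point I would double-check carefully is the reduction to finitely many $S$-orbits of $B$. This could appear problematic if $S$ were unrelated to the support of $B$, but the observation that $\supp B \subseteq S$ whenever $\vr v$ is $S$-supported, together with the fact that orbit-finiteness is preserved when passing from $\supp B$-orbits to $S$-orbits, handles this in one line.
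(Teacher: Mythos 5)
Your proof is correct and follows essentially the same route as the paper's (much terser) argument: part (i) by unwinding the definition of an $S$-supported function and using that atom automorphisms fix ring elements, and part (ii) by summing the characteristic vectors of the finitely many $S$-orbits of $B$ weighted by the constant values from (i). The only extra content you supply --- that $\supp B \subseteq S$, so orbit-refinement makes $B$ a finite union of $S$-orbits --- is precisely the justification the paper leaves implicit in the phrase ``$O$ ranges over finitely many $S$-orbits.''
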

\begin{proof}
%
The first part follows immediately as $S$ supports ${\vr v}$.
This allows us to write
$\vr v(O) \in \Q$ in place of $\vr v(x)$ for  $x \in O$. 
As required in the second part, we have:
\begin{align} \label{eq:1O}
\vr v = \sum_O \vr v(O) \cdot \constvr 1 O,
\end{align}
where $O$ ranges over finitely many 
$S$-orbits $O\subseteq B$. 
\end{proof}
%

\para{Orbit-finite bases}
The set $\setof{\constvr 1 b}{b\in B}$ is, by the very definition, a basis of $\Lin B$.
As our first result we prove that whenever $B$ is orbit-finite, this set can be extended to
an orbit-finite basis of the larger space $\GLin B$:
\begin{theorem}[Orbit-Finite Basis Theorem]\label{thm:basis}
For every orbit-finite set $B$, the space $\GLin B$ has an orbit-finite basis.
\end{theorem}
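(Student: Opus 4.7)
The plan is to exhibit as basis the family
\[
\mathcal{B} \;=\; \setof{\constvr 1 {\orbit T x}}{x \in B,\ T \subseteqfin \A,\ T \subseteq \supp x}
\]
of characteristic vectors of ``partial orbits'' $\orbit T x = \setof{\pi(x)}{\pi \in \Aut T}$, where $T$ ranges over finite subsets of $\supp x$. Each $\supp x$ is finite, so only finitely many $T$ are admissible per $x$, and the map $(x, T) \mapsto \constvr 1 {\orbit T x}$ is equivariant; orbit-finiteness of $B$ therefore lifts routinely to orbit-finiteness of $\mathcal{B}$.

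The linchpin of both later arguments is a support identity $\supp (\orbit T x) = T$ whenever $T \subseteq \supp x$. The inclusion $\subseteq$ is tautological, and for the converse one checks that, given $\alpha \in T$ and a fresh $\beta \notin T \cup \supp x$, the transposition $\alpha \leftrightarrow \beta$ lies in $\Aut{T \setminus \set{\alpha}}$ but does not fix $\orbit T x$, since $\alpha \in \supp x$ ensures the image orbit differs.

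For spanning, Lemma~\ref{lem:1O} reduces the task to placing each $\constvr 1 X$ in $\lin{\mathcal{B}}$ for $X$ an arbitrary $S$-orbit of $B$. Picking $x \in X$ and setting $T := S \cap \supp x$, the base case $T = S$ gives $X = \orbit T x \in \mathcal{B}$ directly; otherwise the larger partial orbit $X' := \orbit T x \in \mathcal{B}$ strictly contains $X$ and splits as $X \sqcup Y_1 \sqcup \ldots \sqcup Y_m$ into $S$-orbits. The crux is that every $y \in Y_i$ satisfies $|S \cap \supp y| > |T|$: were equality to hold, any witnessing $\pi \in \Aut T$ with $y = \pi(x)$ would send $\supp x \setminus S$ entirely outside $S$, allowing a standard factorisation $\pi = \sigma \tau$ with $\sigma \in \Aut S$ and $\tau \in \Aut{\supp x}$, hence $y \in \orbit S x = X$, contradicting $y \in Y_i$. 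Sub-induction on $|S| - |T|$ then yields each $\constvr 1 {Y_i} \in \lin{\mathcal{B}}$, and $\constvr 1 X = \constvr 1 {X'} - \sum_i \constvr 1 {Y_i}$ closes the step.

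For linear independence, any finite dependence is supported by some $S$, so it suffices to show that $\mathcal{B}_S := \setof{\vr u \in \mathcal{B}}{\supp \vr u \subseteq S}$ is independent inside the finite-dimensional space of $S$-supported vectors, whose natural basis consists of the characteristic vectors of $S$-orbits of $B$ (Lemma~\ref{lem:1O}). I would define $\chi$ sending an $S$-orbit $Q$ to $\constvr 1 {\orbit {T_Q} y} \in \mathcal{B}_S$ for any $y \in Q$ and $T_Q := S \cap \supp y$; the support identity makes $\chi$ well-defined, and it is bijective because, given $O = \orbit T x \in \mathcal{B}_S$, a $\pi \in \Aut T$ sending $\supp x \setminus T$ into $\A \setminus S$ produces $y := \pi(x) \in O$ with $\supp y \cap S = T$, whence $\chi(\orbit S y) = O$, while injectivity follows by a similar construction of an $S$-automorphism between any two such witnesses. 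Expanding each $\constvr 1 O$ in the $S$-orbit basis as $\sum_{Q \subseteq O} \constvr 1 Q$, the resulting incidence matrix is block-lower-triangular once rows $Q$ and columns $O$ are sorted by $|T_Q|$ and $|\supp O|$ respectively: $Q \subseteq O = \orbit T x$ forces $\supp y \supseteq T$ for $y \in Q$, so $|T_Q| \geq |\supp O|$, with equality exactly when $\chi(Q) = O$. Invertibility, hence independence, is then immediate. I expect the main obstacle to be the spanning induction, specifically the factorisation $\pi = \sigma \tau$ that drives the strict increase in $|S \cap \supp y|$ when passing to each $Y_i$; once this and the support identity are in hand, the triangular structure of the incidence matrix is direct.
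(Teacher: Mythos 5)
There is one genuine gap: nothing in your setup guarantees that $\orbit{T}{x} \subseteq B$. The $T$-orbit of $x$ is its equivalence class under $\Aut{T}$ in the whole universe, and it is contained in $B$ only when $\supp{B}\subseteq T$ (or when $B$ is equivariant, in which case $\orbit{T}{x}\subseteq\orbit{\emptyset}{x}\subseteq B$). For a non-equivariant example take $B=\setof{\a\b\in\otu{\A}{2}}{\a\neq\g_0\neq\b}$ for a fixed atom $\g_0$, a single $\set{\g_0}$-orbit: for $x=\a\b\in B$ and $T=\emptyset$ you get $\orbit{T}{x}=\otu{\A}{2}\not\subseteq B$, so $\constvr{1}{\orbit{T}{x}}$ is not an element of $\GLin{B}$ at all; the same happens in your spanning recursion, where the complementary $S$-orbits $Y_i$ of $X'=\orbit{T}{x}$ need not lie in $B$, and in your independence argument, where $\mathcal{B}_S$ no longer lives in the space of $S$-supported vectors over $B$. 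The repair is exactly the paper's preprocessing: pass to $B\times\set{\supp B}$ (orbitwise), so that $\supp{B}\subseteq\supp{x}$ for every $x$, and let $T$ range only over $\supp{B}\subseteq T\subseteq\supp{x}$; then $\Aut{T}\subseteq\Aut{\supp B}$ preserves $B$ and all your steps go through. With that caveat, your basis is precisely the paper's set of characteristic vectors of \emph{tight} orbits, and your spanning argument is the paper's: the claim that every other $S$-orbit $Y_i$ inside $\orbit{T}{x}$ satisfies $\size{S\cap\supp{y}}>\size{T}$, proved by factoring the witnessing $T$-automorphism through an $S$-automorphism, is Claim~8 of the paper, and your induction on $\size{S}-\size{T}$ plays the role of the paper's well-founded order $\prec$.

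Your linear-independence argument, by contrast, is genuinely different and also correct. The paper proves that a tight orbit contained in a finite union of tight orbits is contained in one of them, then takes an inclusion-maximal orbit in a putative dependence and evaluates at a point it does not share with the others. You instead exhibit a bijection $\chi$ between the $S$-orbits of $B$ and the tight orbits with support in $S$, and observe that the incidence matrix expressing the $\constvr{1}{O}$ in the standard basis $\setof{\constvr{1}{Q}}{Q \text{ an } S\text{-orbit}}$ is triangular with permutation blocks on the diagonal, hence unimodular and invertible over any commutative ring. This buys a slightly stronger statement for free (the tight orbits with support in $S$ form a basis of the finite-dimensional space of $S$-supported vectors, re-deriving spanning by a change of basis), at the cost of the counting bijection; the paper's argument needs no counting and no matrix inversion. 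Both are sound once the containment issue above is fixed.
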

The result constitutes a useful tool in our subsequent considerations of solvability of systems of linear equations.
The proof is delegated to Section~\ref{sec:bases}.

\begin{slremark} \label{rem:effect}
Theorem~\ref{thm:basis}, as well as our subsequent results, are all effective.
Indeed,
the transformation from $B$ to $\basisof B$ is equivariant,
and the set $\basisof B$ as well as  
the transformation from $\vr v\in \GLin B$ to its basis representation in $\Lin{\basisof B}$ are
supported by $\supp B$, 
and therefore all are subject to the general rule of thumb: 
(hereditarily) orbit-finite sets are finitely representable, and
all finitely-supported transformations between these sets are effectively computable
(for a detailed presentation we refer to~\cite{program-atoms} or~\cite[Sect.~4,8,9]{atombook}).
\end{slremark}

%


\begin{slexample}\label{ex:basis}
Let $B = \otu \A 2$. 
For $\g\in\A$, let $\g \_ = \setof{\g \a}{\a\in \A\setminus\set{\g}}\subseteq B$; and symmetrically
let $\_ \g = \setof{\a \g}{\a\in \A\setminus\set{\g}}\subseteq B$.
One obtains a basis $\basisof B\subseteq\GLin B$ by extending $\setof{\constvr 1 {\a\b}}{\a\b\in B}$
with the constant vector $\constvr 1 {}$ 
that maps every pair $\a \b\in B$ to $1$, and also,
for every $\g\in\A$, with the characteristic vector $\constvr 1 {\g \_}$that maps all pairs in $\g \_$ to $1$ and all others to $0$, 
and the characteristic vector $\constvr 1 {\_ \g}$ that maps all pairs in $\_ \g$ to $1$ and all others to $0$.

Towards seeing that this is indeed a base,
consider any vector $\vr v \in \GLin {\otu \A 2}$.
Let $S = \supp{\vr v}$.
Let $O_{\bullet\bullet} = \otu {(\A \setminus S)} 2$; for $\a\in S$, let
$O_{\a \bullet} = \set{\a}\times(\A\setminus S)$ and
$O_{\bullet \a} = (\A\setminus S)\times\set{\a}$.
Note that all these are $S$-orbits. 
The decomposition~\eqref{eq:1O} of $\vr v$ may be rewritten into: 
\begin{align*} 
\vr v \quad = \quad  \vr v(O_{\bullet\bullet}) \cdot \constvr 1 {} \ \ +  \!\!
 \sum_{\quad\a\in S\quad} (\vr v(O_{\a\bullet}) - \vr v(O_{\bullet\bullet})) \cdot \constvr 1 {\a \_} & \ \ +  \\
 \sum_{\quad\b\in S\quad} (\vr v(O_{\bullet \b}) - \vr v(O_{\bullet\bullet})) \cdot \constvr 1 {\_ \b} & \ \ +  \\ 
 \sum_{\a \b\in \otu S 2} (\vr v(\a \b)  - \vr v(O_{\a\bullet}) - \vr v(O_{\bullet\b}) + \vr v(O_{\bullet\bullet})) \cdot \constvr 1 {\a \b}&. 
\end{align*}
%
%
This yields a representation of $\vr v$ in the base $\basisof B$, and the representation is unique.
\end{slexample}

\section{Solving linear equations}\label{sec: linear eq}


We note that the inner product of two vectors $\vr x, \vr y\in\GLin B$, defined as
\[
\innerprod {\vr x} {\vr y} \ = \ \sum_{b\in B} \vr x(b)\, \vr y(b),
\]
is not always well-defined.
We consider the right-hand side sum as well-defined when there are only finitely many $b\in B$
for which both $\vr x(b)$ and $\vr y(b)$ are non-zero
(equivalently, the intersection $\dom {\vr x} \cap \dom {\vr y}$ is finite).
In particular, the inner product $\innerprod {\vr x} {\vr y}$ is always well-defined when one of $\vr x, \vr y$ is finitary.
\begin{slremark}
Consider $\Q = \Rat$. 
Since vectors are finitely supported and hence (c.f.~Lemma \ref{lem:1O}) contain only finitely many different numbers, 
$\dom {\vr x} \cap \dom {\vr y}$ is finite exactly when 
the right-hand side sum is \emph{unconditionally convergent},
i.e., convergent to the same value irrespectively of the order in which
the elements $b\in B$ are enumerated%
\footnote{We are grateful to Szymon Toru{\'n}czyk for attracting our attention to unconditional convergence.}.
\end{slremark}

\para{Systems of linear equations}

Fix an orbit-finite set $C$ (one can think of $C$ as an indexing set of columns of a matrix).
By a linear equation over $C$ we mean a pair $e = (\vr a, t)$ where 
$\vr a \in\GLin C$ is a vector of left-hand side coefficients and $t\in\Q$ is a right-hand side target value.
A solution of $e$ is any vector $\vr x\in\GLin C$ such that the inner product
$\innerprod {\vr a} {\vr x}$ is well-defined and equals $t$.
We may consider constrained solutions, e.g., 
\emph{finitary} ones.

A system of linear equations is just an indexed set of equations over the same set $C$.
Formally, an orbit-finite system of linear equations (over $C$)
is any finitely-supported function $B\tofs \GLin C \times \Q$
from some orbit-finite indexing set $B$ (one may think of $B$ as an indexing set of rows of a matrix).
By projecting to the first component we get a function $\vr A : B\tofs \GLin C$
which we call the \emph{matrix} of the system;
by projecting to the second component (the target) 
we get a finitely-supported function $\vr t : B \tofs \Q$, i.e., a vector in $\GLin B$, 
which we call  the \emph{target} of the system.
The representation $B\tofs \GLin C$ of the matrix may be equivalently written as
a finitely-supported function $\vr A : B \times C \tofs \Q$
(thus $\vr A \in \GLin {B\times C}$ and hence it deserves boldface).

Systems of linear equations, when input to algorithms,
are assumed in the sequel to be given by a matrix-target pair $(\vr A, \vr t)$:
\vspace{-4mm}
\begin{align*}
& \quad\ \ \ \,  \begin{matrix} \ \cdots \quad &  c  & \quad \cdots\ \end{matrix} \\
& \begin{matrix} \vdots \\ b \\ \vdots \end{matrix} \ \ 
\begin{bmatrix}
\  & \vdots & \ \\
\ \cdots & \vr A(b,c) & \cdots\ \  \\
      & \vdots & \  
\end{bmatrix}
\qquad
\begin{bmatrix}
 \vdots \\
 \vr t(b)  \\
      \vdots  \  
\end{bmatrix}
\end{align*}
A solution of a system of equations is any vector $\vr x\in\GLin C$ which is a solution of all equations in the system.
Note that $C$ can be seen as the indexing set of unknowns of the system.


For $b\in B$ we denote by $\vr A(b, \_) \in \GLin C$ the row vector indexed by $b$,
and symmetrically, for $c\in C$ we denote by $\vr A(\_, c) \in \GLin B$ the column vector indexed by $c$.
One can also consider the \emph{augmented} matrix $\vr A | \vr t : B\times (C\uplus \set{*}) \tofs \Q$.

In all the examples below let $\Q = \Rat$.

\begin{slexample}
Let columns be indexed by $C = \otu \A 2$ and rows by $B = \utu \A 2$.
Consider the system of equations containing, 
for every $\set{\a,\b}\in B$, the equation
$(\constvr 1 {\a\b} + \constvr 1 {\b\a}, 1)$.
Using the formal-sum notation as in~\eqref{eq:formalsum} it may be written as
$(\a\b + \b\a, 1)$ or,
identifying column indexes $\a \b\in C$ with unknowns, as: 
\[
\a\b + \b\a \ = \ 1 \qquad (\a,\b\in\A, \a \neq \b).
\]
All the equations are thus finitary, and the target is $\vr t = \constvr 1 B$.
The constant vector $\vr x = \constvr {\frac 1 2} {} : (\a,\b) \mapsto \frac 1 2$ is a solution.
The system has no finitary solution, as such a solution is in contradiction with the infinitary target $\vr t = \constvr 1 B$.
Furthermore, the system has no integer (infinitary) solution either, as any such solution $\vr x$ would 
necessarily satisfy, for every distinct atoms $\a,\b \in \A\setminus \supp{\vr x}$, the equality $\vr x(\a\b) = \vr x(\b\a)$, 
which is in contradiction
with $\vr x(\a\b) + \vr x(\b\a) = 1$.
\end{slexample}

\begin{slexample} \label{ex:nofin}
Let $C = \otu \A 2$, $B = \A$, and 
consider the system of equations containing, for every $\a\in \A$, the equation
$(\constvr 1 {\a\_}, 1)$.
As before, identifying column indexes $\a \b\in C$ with unknowns, the system may be written as:
\[
\sum_{\b\in\A\setminus\set{\a}} \! \a\b \ = \ 1 \qquad (\a\in \A).
\]
All the equations are thus infinitary.
The system has an integer solution. 
Take any two fixed atoms $\c,\d\in\A$ and consider the vector
\[
\vr x \ = \ \constvr 1 {\_ \c} + \constvr 1 {\c \d}.
\]
Indeed, for $\a\neq \c$ we have
$\innerprod {\constvr 1 {\a\_}}  {\vr x} = \constvr 1 {\a\c} \cdot \constvr 1 {\a\c} = 1$ as required.
Furthermore, for $\a=\c$ we have
$\innerprod {\constvr 1 {\a\_}}  {\vr x} = \constvr 1 {\c\d} \cdot \constvr 1 {\c\d} = 1$ as required.
The system has no finitary solution (essentially for the same reason as in the previous example), and no equivariant one
(as the only equivariant vectors over $C$ are constant ones $q\cdot\constvr 1 {}$, and the inner product
$\constvr 1 {\a\_} \cdot \constvr 1 {}$ is ill-defined for every $\a\in\A$ as long as $q\neq 0$).
\end{slexample}

The two above examples show that the solvability problem is sensitive to additional restrictions on solutions:
the answer changes if solutions are additionally required to be
equivariant, finitary, or integer.
The next example shows that our implicit restriction to finitely-supported solutions also matters:

\begin{slexample} \label{ex:no-sol}
Let $C = \utu \A 2$, $B = \A$, and 
consider the system of equations containing, for every $\a\in B$, the equation
$(\constvr 1 {\set{\a,\_}}, 1)$, where
\[
\set{\a,\_} \ = \ \setof{\set{\a, \g}}{\a\neq\g\in\A}
\]
is the set of all 2-sets containing $\a$.
We argue that the system has no (finitely supported) solution 
(despite the apparent similarity to the system in Example~\ref{ex:nofin}). 
Towards contradiction suppose it has a solution $\vr x$, supported by some $S\subseteqfin \A$.
Thus it is constant on every $S$-orbit in $\utu \A 2$.
An infinite $S$-orbit in $\utu \A 2$ is either the set $\utu {\A\setminus S} 2$
of all 2-sets disjoint from $S$
or, for some fixed $\a\in S$, the set of all 2-sets with one element $\a$ and the other element not in $S$:
\[
\setof{\set{\a,\g}}{\g\in\A\setminus S}
\]
Therefore each infinite $S$-orbit in $\utu \A 2$ intersects infinitely with $\set{\a,\_}$ for some $\a\in\A$.
In consequence, $\vr x$ is necessarily $0$ when restricted to any infinite $S$-orbit in $\utu \A 2$ as otherwise
$\innerprod {\constvr 1 {\set{\a,\_}}} {\vr x}$ would be ill-defined for some $\a\in\A$.
Therefore $\vr x$ is forcedly finitary, and the argument of the previous examples applies.

On the other hand the system would have an integer solution if we drop the implicit finite-support constraint.
For instance, taking any enumeration $\A = \set{\a_0, \a_1, \a_2, \ldots}$ of atoms, 
the function $\vr x : C\to \Q$ that maps each set
$\set{\a_{2n},\a_{2n+1}}$ to $1$, for $n = 0, 1, \ldots$, and all other sets to $0$, satisfies all equations.
Note that $\vr x$ is not finitely supported, i.e., there is no finite $S\subseteq \A$ such that
$\pi(\vr x) = \vr x$ for all $\pi \in \Aut S$.
\end{slexample}

\para{Solvability of linear equations}

We investigate the following type of solvability problems: 
%


\prob{\solvname$(\Q)$}
{an orbit-finite system of linear equations} 
{does it have a solution} 

%

\smallskip

\noindent
As our main result we prove:
\begin{theorem} \label{thm:solv}
\solvname$(\Q)$ is decidable for every fixed effective commutative ring $\Q$.
\end{theorem}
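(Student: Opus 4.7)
The plan is a two-stage reduction. First, use the Orbit-Finite Basis Theorem to convert general solvability over $C$ into \emph{finitary} solvability over a possibly enlarged orbit-finite set of unknowns $\basisof C$. Second, show that finitary solvability of an orbit-finite system reduces, by a support-bounding argument, to solvability of an effectively-constructible \emph{finite} system over $\Q$, which is decidable by the standing effectiveness assumption on the ring.

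For the first stage, given an input system $(\vr A, \vr t)$ with $\vr A : B\times C \tofs \Q$ and $\vr t : B \tofs \Q$, invoke Theorem~\ref{thm:basis} to compute an orbit-finite basis $\basisof C$ of $\GLin C$. Every candidate solution $\vr x\in\GLin C$ admits a unique representation $\vr x = \sum_{\widehat c \in\basisof C} \vr y(\widehat c)\cdot \widehat c$ for some finitary $\vr y\in\Lin{\basisof C}$. Substituting this into the $b$-th equation rewrites $\innerprod{\vr A(b,\_)}{\vr x}=\vr t(b)$ as $\sum_{\widehat c} \vr y(\widehat c)\cdot\innerprod{\vr A(b,\_)}{\widehat c} = \vr t(b)$. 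The plan is to pick $\basisof C$ (possibly after refining with the support $\supp{\vr A}\cup\supp{\vr t}$, in the spirit of Example~\ref{ex:basis}) so that every entry $\vr A'(b,\widehat c) := \innerprod{\vr A(b,\_)}{\widehat c}$ is either well-defined or can be excluded by a structural side-condition on $\vr y$. This produces a new orbit-finite system $(\vr A',\vr t)$ over the unknowns $\basisof C$ such that $(\vr A,\vr t)$ is solvable iff $(\vr A',\vr t)$ admits a finitary solution; effectiveness follows from Remark~\ref{rem:effect}.

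For the second stage, consider an orbit-finite system over unknowns $D$ (with $D=\basisof C$ in our application), supported by some finite $S$. I claim that if a finitary solution $\vr y$ exists, one exists with $\supp{\vr y}\subseteq S'=S\cup S_0$ for a fresh $S_0\subseteqfin \A\setminus S$ of cardinality bounded by the atom dimensions of elements of $D$ and equations of the system. Indeed, since $\dom{\vr y}$ is finite and $\supp{\vr y}$-invariant, every $d\in\dom{\vr y}$ must have $\supp d\subseteq \supp{\vr y}$, and by Lemma~\ref{lem:repr} only finitely many $d\in D$ have support inside any given finite $S'$. Moreover, by $\Aut{S'}$-equivariance, two row-indices $b,b'\in B$ in the same $S'$-orbit impose identical constraints on $S'$-supported $\vr y$, collapsing the orbit-finite system of equations to finitely many representatives. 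The resulting finite linear system over $\Q$ is decidable by assumption, and it suffices to iterate over the finitely many essentially distinct choices of $S_0$ up to $\Aut{S}$-symmetry.

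The main obstacle is in the first stage: the inner product $\innerprod{\vr A(b,\_)}{\widehat c}$ may be ill-defined when both $\vr A(b,\_)$ and $\widehat c$ have infinite domains that overlap infinitely (as is typical for the orbit-characteristic vectors appearing in the basis of Example~\ref{ex:basis}). Handling these ill-defined entries — either by refining the basis so that such overlaps are always finite, or by imposing appropriate side-conditions on $\vr y$ that faithfully correspond to well-definedness of $\innerprod{\vr A(b,\_)}{\vr x}$ for the reconstructed $\vr x$ — is the crux of the argument. One must verify that this treatment captures exactly the finitely-supported solutions of the original system, with neither loss nor spurious gain. A careful inspection of the basis produced by the constructive proof of Theorem~\ref{thm:basis}, in tandem with the support structure of $(\vr A,\vr t)$, should supply the needed guarantees.
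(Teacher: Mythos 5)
Your first stage is the right idea and matches the paper's actual route (Section~\ref{sec:solv2fin-solv}): pass to the orbit-finite basis $\basisof C$ and replace each column block by the products $\innerprod{\vr A}{\belt}$ for $\belt\in\basisof C$, thereby turning general solvability into finitary solvability over a new orbit-finite column set. But the obstacle you flag at the end is not a loose end to be "supplied by careful inspection" --- it is the entire content of that reduction. The paper resolves it not by refining the basis or adding side-conditions on $\vr y$, but by simply \emph{discarding} every basis vector $\belt$ for which $\innerprod{\vr A}{\belt}$ is ill-defined; the correctness of this discarding rests on a non-obvious lemma (Lemma~\ref{lem:wd}): whenever $\innerprod{\vr A}{\vr x}$ is well-defined and $\constvr 1 {O'}$ occurs with non-zero coefficient in the basis representation of $\vr x$, then $\innerprod{\vr A}{\constvr 1 {O'}}$ is itself well-defined. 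Proving this requires the characterisation of well-definedness via exactness (Lemma~\ref{lem:exact}) and the tight-orbit structure of the basis from Section~\ref{sec:bases}. Without that lemma you cannot rule out that a solution $\vr x$ is a finite combination of basis vectors each of which individually has an ill-defined product with some row, so your correspondence between solutions of $(\vr A,\vr t)$ and finitary solutions of $(\vr A',\vr t)$ is not established.

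The second stage has a more serious gap. You reduce finitary solvability to a finite system by asserting that if a finitary solution exists, then one exists whose support is contained in $S\cup S_0$ with $\size{S_0}$ bounded by the atom dimension. The two facts you do prove --- that an $S'$-supported finitary vector has its domain inside the finitely many $S'$-supported unknowns, and that rows in the same $S'$-orbit impose identical constraints on $S'$-supported vectors --- only give the easy direction (a solution of the finite system yields a solution of the whole system). The converse, that solvability forces the existence of a bounded-support solution, is precisely the hard part, and you offer no argument for it. This is where the paper expends nearly all of its effort: it does \emph{not} bound supports, but instead introduces local solvability (explicitly noting that it is necessary but not sufficient) and the cog construction, and iteratively reduces the atom dimension of the instance (Lemma~\ref{lem:decr-dim}), reaching a finite system only after atom dimension $0$. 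A direct support-bounding argument would make that machinery unnecessary; as it stands, your claim is an unproven assertion of comparable difficulty to the theorem itself, so the proposal does not constitute a proof.
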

The proof, occupying the whole Sections~\ref{sec:fin-solv} and~\ref{sec:solv2fin-solv},
 is by a reduction to solvability of finite systems of linear equations, and
the transformation suffers from a singly-exponential blowup.
As an intermediate step 
we also consider a variant of the problem where solutions are constrained to be finitary, called
\finsolvname$(\Q)$.

\begin{slremark}
In case $\Q = \Rat$, when coefficients in the input system are rational and we seek for rational solutions,
as a corollary of the proof we deduce that
the answer does not change if solutions are relaxed to real ones.
\end{slremark}

\medskip

\para{Spans}
For a subset $P\subseteq \GLin B$, we define $\Span {} P\subseteq \GLin B$ as the set of 
all linear combinations of vectors from $P$, forming a subspace of $\GLin B$:
\begin{align*} 
\begin{aligned}
\Span {} P \ = \ \ & \setofbeg{q_1 \cdot \vr p_1 + \ldots + q_k \cdot \vr p_k}{k\geq 0,}\\
& \setofend{\ q_1,\ldots,q_k\in\Q, \ \vr p_1, \ldots, \vr p_k\in P}.
\end{aligned}
\end{align*}
Given a matrix $\vr A \in \GLin {B\times C}$ with rows $B$ and columns $C$, we can define a partial operation of 
multiplication of $\vr A$ by a vector $\vr v\in\GLin C$ in an expected way:
\[
(\mult {\vr A}{\vr v})(b) = \innerprod {\vr A(b, \_)} {\vr v}
\]
for every $b\in B$.
The result $\mult {\vr A} {\vr v} \in \GLin B$ is well-defined if $\innerprod {\vr A(b, \_)}{\vr v}$ is well-defined for all $b\in B$.
The multiplication $\mult {\vr A}{\vr v}$ can be also seen as an orbit-finite linear combination of 
column vectors $\vr A(\_, c)$, for $c\in C$, with coefficients given by $\vr v$. 
This allows us to define the span of $\vr A$ seen as a $C$-indexed orbit-finite set of 
vectors $\vr A(\_, c) \in \GLin B$:
\begin{align} \label{eq:Gspan}
\GSpan {} {\vr A}  :=  \setof{\mult{\vr A}{\vr v}}{\vr v \in \GLin C, \ \mult{\vr A}{\vr v} \text{ well-defined}}.
\end{align}
The solvability problem for a system of equations $(\vr A, \vr t)$ amounts thus to deciding
if $\vr t \in \GSpan {} {\vr A}$.
When $\vr v$ is finitary, well-definedness is vacuous, and we may define:
\[
\Span {} {\vr A} \ := \ \setof{\mult{\vr A}{\vr v}}{\vr v \in \Lin C} \ = \ \Span {} P,
\]
for $P = \setof{\vr A(\_, c)}{c\in C}$ the set of column vectors of $\vr A$.

\para{Outline}

Concerning the proofs, we proceed in three steps.
We start by proving the Orbit-Finite Basis Theorem in Section~\ref{sec:bases},
a crucial technical tool for subsequent steps. 
As a key novelty, we introduce here the concept of \emph{tight} orbits.  
Then we prove decidability of \finsolvname$(\Q)$ in Section~\ref{sec:fin-solv}, by reducing it to solvability 
of classical finite systems of linear equations.
This step relies on a generalisation of \emph{cogs} introduced in~\cite{BKM21,HR21}.
Finally, 
in Section~\ref{sec:solv2fin-solv} we reduce \solvname$(\Q)$ to \finsolvname$(\Q)$,
thus completing the proof of Theorem~\ref{thm:solv}.
This part strongly relies again on the technology developed in Section~\ref{sec:bases}.


\section{Proof of the Orbit-Finite Basis Theorem} \label{sec:bases}

In this section we prove Theorem~\ref{thm:basis},
i.e., provide a construction of an orbit-finite basis in $\GLin B$, where $B$ is an arbitrary orbit-finite set.

\para{Preliminaries}
The mapping $x\mapsto \supp x$ is equivariant:
\begin{claim} \label{claim:pisup}
$\supp{\pi(x)} = \pi(\supp x)$ for every element $x$ and $\pi\in\Aut{}$.
\end{claim}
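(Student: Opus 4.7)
The plan is to prove both inclusions using the standard characterization of the support as the least finite supporting set of $x$.

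First I would show that $\pi(\supp x)$ supports $\pi(x)$. Take any $\sigma \in \Aut{\pi(\supp x)}$, i.e., a permutation fixing every atom of $\pi(\supp x)$ pointwise. The key observation is that the conjugate $\pi^{-1} \sigma \pi$ fixes $\supp x$ pointwise: for any $\a \in \supp x$, we have $\pi(\a) \in \pi(\supp x)$, so $\sigma(\pi(\a)) = \pi(\a)$, hence $\pi^{-1}\sigma\pi(\a) = \a$. Since $\supp x$ supports $x$, we get $\pi^{-1}\sigma\pi(x) = x$, equivalently $\sigma(\pi(x)) = \pi(x)$. So indeed $\pi(\supp x)$ is a finite support of $\pi(x)$, and by minimality of the least support, $\supp{\pi(x)} \subseteq \pi(\supp x)$.

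For the reverse inclusion, I would apply exactly the same argument with $\pi^{-1}$ in place of $\pi$ and $\pi(x)$ in place of $x$: this yields $\supp{\pi^{-1}(\pi(x))} \subseteq \pi^{-1}(\supp{\pi(x)})$, which simplifies to $\supp x \subseteq \pi^{-1}(\supp{\pi(x)})$, and applying $\pi$ to both sides gives $\pi(\supp x) \subseteq \supp{\pi(x)}$.

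There is no real obstacle here; the only thing to be careful about is justifying the conjugation trick and invoking the fact that every finitely supported element admits a least support (already noted in the preliminaries via the reference to \cite[Sect.~6]{atombook}). The whole argument is a few lines and purely group-theoretic.
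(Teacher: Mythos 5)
Your proof is correct, and it is the standard conjugation argument: the paper itself states this claim without proof (it is a well-known fact from the theory of sets with atoms, implicit in the cited reference \cite[Sect.~6]{atombook}), so there is nothing to diverge from. Both inclusions are justified properly, and the appeal to existence of a least support is exactly the right thing to invoke.
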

%

We rely on the following basic properties of orbits:

\begin{restatable}{claim}{claimOrbitSup}
\label{claim:orbitsup}
Let $S\subseteqfin\A$.
Each equivariant orbit $O$ 
contains at most $\size S!$ many elements $x$ with $\supp x = S$.
\end{restatable}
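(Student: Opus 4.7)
The plan is to pass through the concrete representation of equivariant orbits given by Lemma~\ref{lem:repr} and then count. First, I would fix an equivariant bijection $O \cong \faktor{\otu \A k}{G}$ for some $k \in \Nat$ and subgroup $G \leq \symg k$. Since equivariant bijections commute with the $\Aut{}$-action, they preserve supports (using Claim~\ref{claim:pisup}), so it suffices to bound the number of classes in $\faktor{\otu \A k}{G}$ whose support equals $S$.

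Next I would pin down the support of an arbitrary class $\quot G(a_1, \ldots, a_k)$. Any atom automorphism fixing $\{a_1, \ldots, a_k\}$ pointwise fixes the tuple itself and hence its class, so $\{a_1, \ldots, a_k\}$ is a support. To see minimality, pick any index $i$ and any $\pi \in \Aut{\{a_1, \ldots, a_k\} \setminus \{a_i\}}$ that sends $a_i$ to a fresh atom $b \notin \{a_1, \ldots, a_k\}$. Then $\pi \cdot (a_1, \ldots, a_k)$ contains the entry $b$, while every representative $(a_1, \ldots, a_k) \circ \sigma$ with $\sigma \in G$ has entries drawn from $\{a_1, \ldots, a_k\}$; hence $\pi$ does not fix the class. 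Therefore $\supp{\quot G(a_1, \ldots, a_k)} = \{a_1, \ldots, a_k\}$, a set of cardinality exactly $k$.

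The counting step is then immediate. If $|S| \neq k$, no element of $O$ has support $S$ and the bound holds vacuously. If $|S| = k$, every class with support $S$ is represented by a non-repeating $k$-tuple whose entries are a permutation of the atoms in $S$; there are precisely $k! = |S|!$ such tuples, so there are at most $|S|!$ distinct classes (in fact $|S|!/|G|$, since $G$ acts freely on $\otu \A k$).

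The only mildly subtle point is the minimality half of the support computation; it hinges on the fact that the $G$-action is by coordinate permutations of non-repeating tuples, so all representatives of a class use the same underlying set of atoms, which rules out any representative containing the fresh atom $b$.
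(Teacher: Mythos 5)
Your proof is correct, but it takes a genuinely different route from the paper's. The paper argues abstractly and from first principles: fix one element $x\in O$ with $\supp x = S$; every other element of the equivariant orbit is $\pi(x)$ for some $\pi\in\Aut{}$, and $\pi(x)$ depends only on $\pi$ restricted to $\supp x = S$; moreover $\supp{\pi(x)} = S$ forces $\pi(S)=S$ by Claim~\ref{claim:pisup}, so at most $\size S!$ restrictions $\restr{\pi}{S}$, hence at most $\size S!$ such elements. Your argument instead routes through the classification of equivariant orbits (Lemma~\ref{lem:repr}), computes the least support of a class $\quot G(a_1,\ldots,a_k)$ to be exactly $\set{a_1,\ldots,a_k}$, and counts representatives. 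All the steps check out: equivariant bijections preserve least supports, your minimality argument correctly shows no proper subset of $\set{a_1,\ldots,a_k}$ can be a support (since the least support would then be contained in it), and the freeness of the $G$-action gives the sharper count $\fact{\size S}/\size G$. What your approach buys is this exact count and, as a byproduct, the fact that all elements of an equivariant orbit have supports of one fixed cardinality $k$; what it costs is the dependence on the representation theorem, whereas the paper's two-line argument is self-contained and transfers verbatim to $S'$-orbits for nonempty $S'$ by restricting to $\pi\in\Aut{S'}$.
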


\vspace{-2mm}
\begin{restatable}{claim}{claimOrbitFin}
\label{claim:orbitfin}
Every orbit is either a singleton or an infinite set.
\end{restatable}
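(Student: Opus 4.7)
The plan is to prove the contrapositive: if an orbit $O$ is not a singleton, it must be infinite. So suppose $O$ is an $S$-orbit (for some $S \subseteqfin \A$) containing two distinct elements. Fix any $x \in O$; then there is some $\pi \in \Aut S$ with $\pi(x) \neq x$. I would use this to show that $\supp x \setminus S$ is nonempty, and then exhibit infinitely many distinct elements of $O$ obtained from $x$ by transposing one atom of $\supp x \setminus S$ with varying fresh atoms.

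For the first step: since $\supp x$ supports $x$, any automorphism fixing $\supp x$ would fix $x$. But $\pi(x) \neq x$, so $\pi$ must move some atom $a$; since $\pi \in \Aut S$, this atom $a$ lies in $\supp x \setminus S$. In particular, $\supp x \setminus S$ contains at least one atom $a$.

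For the second step: for each atom $b \in \A \setminus (S \cup \supp x)$, let $\tau_b \in \Aut{}$ be the transposition swapping $a$ and $b$. Since neither $a$ nor $b$ lies in $S$, we have $\tau_b \in \Aut S$, and hence $\tau_b(x) \in O$. By Claim~\ref{claim:pisup}, $\supp{\tau_b(x)} = \tau_b(\supp x) = (\supp x \setminus \set a) \cup \set b$. For distinct choices $b \neq b'$ in $\A \setminus (S \cup \supp x)$, the atom $b$ belongs to $\supp{\tau_b(x)}$ but not to $\supp{\tau_{b'}(x)}$ (as $b \notin \supp x \setminus \set a$ and $b \neq b'$), so the supports differ and hence $\tau_b(x) \neq \tau_{b'}(x)$. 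Since $\A \setminus (S \cup \supp x)$ is infinite, this yields infinitely many distinct elements in $O$.

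I expect no real obstacle here: the whole argument hinges on the basic equivariance of $\supp{\cdot}$ recorded in Claim~\ref{claim:pisup}, together with the defining property of $\supp x$ that any automorphism fixing it fixes $x$. The only subtlety is being careful that the transpositions chosen lie in $\Aut S$ (which is why we pick both swapped atoms outside $S$) and that distinct choices of $b$ truly yield distinct elements, which is settled by comparing supports rather than the elements themselves.
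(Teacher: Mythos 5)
Your proof is correct and follows essentially the same route as the paper's: identify an atom $a\in\supp x\setminus S$, then produce infinitely many distinct elements of the orbit by moving $a$ to fresh atoms outside $S\cup\supp x$ and distinguishing the images via the equivariance of $\supp{\cdot}$ (Claim~\ref{claim:pisup}). The only cosmetic difference is that you argue the contrapositive and obtain $\supp x\setminus S\neq\emptyset$ from non-singletonhood, whereas the paper case-splits on whether $\supp x\subseteq S$; the content is identical.
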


\begin{definition}
Let $S\subseteqfin\A$.
We define the \emph{$S$-atom dimension} of an $S$-orbit $O$, written
$\dimm S O$, as the size of $\supp x$ for some (every) element $x\in O$, but not counting elements of $S$:
\[
\dimm S O \ := \ \size{\supp x \setminus S}.
\]
The choice of $x$ is irrelevant due to Claim~\ref{claim:pisup}.
When $S$ is clear from the context we omit $S$ and speak of \emph{atom dimension}.
\end{definition}


\para{Reduction to single-orbit $B$}

We claim that we can assume, w.l.o.g., that $B$ is a single orbit.
Indeed, let $T = \supp B$ and let $B = B_1 \uplus \dots \uplus B_n$ be the partition into $T$-orbits.
Then $\GLin B$ is isomorphic to the Cartesian product $\GLin{B_1} \times \dots \times \GLin{B_n}$.
Denote by $\iota_i : \GLin {B_i} \to \GLin B$ the natural embedding that extends a vector $\vr{v} : B_i \tofs \Q$ 
by 0 for all $b\in B\setminus B_i$:
\[
\iota_i(\vr v)(b) \ := \ 
\begin{cases}
\vr v(b) & \text{ if } b\in B_i \\
0 & \text{ otherwise.}
\end{cases}
\]  
Supposing we have orbit-finite bases $\basisof {B_1}, \ldots, \basisof{B_n}$ of the vector spaces $\GLin{B_1}, \ldots, \GLin{B_n}$, respectively,
we get the basis $\basisof B$ of $\GLin B$ as the union of embeddings of $\basisof {B_1}, \ldots, \basisof{B_n}$:
\[
\iota_1(\basisof{B_1}) \ \cup \ \ldots \ \cup \ \iota_n(\basisof{B_n}).
\]
We thus assume w.l.o.g.~that  $B$ is a single $T$-orbit. 

As the support of a function is also a support (but not necessarily the support) of its domain, we note:
\begin{claim} \label{claim:supvect}
$T\subseteq \supp{\vr v}$ for every vector $\vr v\in \GLin B$.
\end{claim}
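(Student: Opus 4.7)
The plan is to reduce the claim to a general set-theoretic fact about finitely supported functions, namely that the (least) support of a function always contains the support of its domain. The statement preceding the claim (\emph{``the support of a function is also a support \ldots of its domain''}) already hints at this route, so the job is to turn that remark into a short argument.

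First I would unfold the definitions. A vector $\vr v\in\GLin B$ is, by definition, a function $B\tofs\Q$, and set-theoretically a function is its graph: $\vr v = \{(b,\vr v(b)) : b\in B\}$. Hence the domain $B$ is recoverable from $\vr v$ as the set of first components, $B = \{b : \exists q\in\Q.\ (b,q)\in\vr v\}$. Using that the action of $\Aut{}$ on pairs is pointwise and that atoms act trivially on $\Q$, this description is equivariant in $\vr v$: any $\pi\in\Aut{}$ that fixes the graph $\vr v$ must also fix the set of first components, i.e.\ $\pi(B)=B$.

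Next I would invoke the minimality of $\supp{(\cdot)}$. Put $S:=\supp{\vr v}$. By definition of support, every $\pi\in\Aut S$ satisfies $\pi(\vr v)=\vr v$, and by the previous paragraph this forces $\pi(B)=B$. Therefore $S$ is a finite support of $B$, and by minimality of the support of $B$ we obtain $T=\supp B\subseteq S=\supp{\vr v}$, which is exactly the claim.

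I do not anticipate any real obstacle: the only subtlety is purely bookkeeping, namely to be explicit about encoding functions as sets of pairs so that ``$\vr v$ determines $B$'' is a genuine equivariant statement rather than an informal aside. Once that is in place the claim is immediate, and in particular it applies uniformly to all $\vr v\in\GLin B$ (including the zero vector, whose graph is $\{(b,0):b\in B\}$ and hence still has support exactly $T$).
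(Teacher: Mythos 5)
Your proof is correct and matches the paper's (one-line) justification, which simply invokes the fact that a support of a function is also a support of its domain and then applies minimality of $\supp B = T$. Your elaboration via the graph encoding of $\vr v$ and the pointwise action is exactly the bookkeeping the paper leaves implicit.
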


\para{Tight orbits}

A key role is played in the proof by the 
concept of \emph{tight} orbits.
\begin{definition}
Let $S\subseteqfin\A$.
An $S$-orbit $O$ is called \emph{tight} 
if $S\subseteq \supp x$ for every $x\in O$.
\end{definition}
In particular, every singleton is a tight orbit.
\begin{slexample} \label{ex:tight}
Recall Example~\ref{ex:basis}.
In case of $B = \otu \A 2$, the tight orbits $O\subseteq B$ are the following ones:
\[
B\qquad
\a \_\qquad
\_ \b\qquad
\set{\a \b}
\]
where $\a,\b$ range over atoms and $\a\neq \b$.
The set $B$ is an equivariant orbit, $\a \_$ is an $\{\a\}$-orbit, $\_ \b$ is a $\{b\}$-orbit,
and $\set{\a, \b}$ is an $\{\a, \b\}$-orbit.
Contrarily, for two fixed and distinct $\a,\b\in\A$, the $\set{\a,\b}$-orbit
\[
\neq\!\!\a \b \ = \ \setof{\g \b}{\g\notin \set{\a, \b}},
\] 
is not tight.
\end{slexample}

W.l.o.g.~we can assume that $B$ is tight, i.e., $T\subseteq \supp b$ for every $b\in B$. 
Indeed, it is 
sufficient to continue with $B':= B \times \{T\}$. Then $B'$ and $B$ are related by a $T$-supported bijection.
For future use we state:

\begin{restatable}{claim}{claimTightIso}
\label{claim:tightiso}
Let $S\subseteqfin\A$.
Every $S$-orbit $O$ is in an $S$-supported bijection with a tight $S$-orbit.
\end{restatable}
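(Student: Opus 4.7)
The plan is to prove Claim~\ref{claim:tightiso} by an explicit construction that mirrors the trick the authors already used to reduce to tight $B$, namely pairing each element of the orbit with the fixed set $S$ itself.

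More concretely, given an $S$-orbit $O$, I set $O' := O \times \{S\}$, where $\{S\}$ denotes the singleton containing the set $S \subseteq \A$ as an element, and I define $\phi : O \to O'$ by $\phi(x) := (x, S)$. First I verify that $\phi$ is a bijection (obvious) and is $S$-supported: for every $\pi \in \Aut{S}$ we have $\pi(S) = S$ pointwise, so $\pi(\phi(x)) = (\pi(x), \pi(S)) = (\pi(x), S) = \phi(\pi(x))$. Then I check that $O'$ is itself an $S$-orbit: the group $\Aut{S}$ acts on $O'$ by $\pi \cdot (x, S) = (\pi(x), S)$, and since by assumption $\Aut{S}$ acts transitively on $O$, it also acts transitively on $O'$.

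It remains to argue that $O'$ is tight, i.e.\ that $S \subseteq \supp{(x, S)}$ for every $(x, S) \in O'$. Using the standard set-theoretic encoding of pairs, $\supp{(x, S)} = \supp x \cup \supp S$. For a nonempty finite set of atoms, $\supp S = S$ (every atom in $S$ must be fixed in order to keep $S$ set-wise invariant, and conversely any permutation fixing $S$ pointwise preserves $S$). Hence $S \subseteq \supp{(x, S)}$ for every element of $O'$, as required.

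No step here looks like a real obstacle; the only minor subtlety is the set-theoretic bookkeeping to confirm $\supp S = S$ and that $\supp{(x,S)} = \supp x \cup \supp S$ under the chosen encoding of pairs, both of which are standard facts from the preliminaries on sets with atoms (and are implicitly used in the ``w.l.o.g.~$B$ is tight'' argument that precedes the claim). If one is uneasy about using $\{S\}$ directly, an equivalent alternative is to pair $x$ with the tuple $(\a_1, \ldots, \a_k)$ obtained from some fixed enumeration of $S$, which has support exactly $S$ by the example in the preliminaries; the construction and proof go through verbatim.
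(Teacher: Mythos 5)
Your construction is exactly the paper's own proof: the map $x\mapsto(x,S)$ onto the tight $S$-orbit $\setof{(x,S)}{x\in O}$, with the same supporting observations. Correct, and essentially identical to the argument in the appendix.
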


For every tight $S$-orbit $O\subseteq B$, the size of $S$ is at most the size of the support of elements of $B$.
Furthermore, by Claim~\ref{claim:orbitsup}, for every fixed $S\subseteqfin\A$ there are only finitely many $S$-orbits inside $B$.
In consequence we deduce that the set  of all tight orbits in $B$ is orbit-finite:
\begin{claim}\label{claim:tight}
The set $\setof{O}{O\subseteq B \text{ a tight orbit}}$ is orbit-finite.
\end{claim}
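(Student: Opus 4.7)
The plan is to exhibit $X := \setof{O}{O\subseteq B \text{ a tight orbit}}$ as the equivariant image of an orbit-finite set; since images of orbit-finite sets under equivariant maps are orbit-finite, this will suffice. Recall the standing assumption that $B$ is a single tight $T$-orbit with $T = \supp B$, and let $k$ denote the common size of $\supp b$ for $b\in B$, which is well-defined by Claim~\ref{claim:pisup}.

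The key idea is that every tight orbit is determined by one of its elements together with a subset of that element's support. Define
\[
Y \ :=\ \setof{(x, S)}{x\in B,\ S\subseteq \supp x,\ \orbit S x \subseteq B}
\]
and $\psi : Y \to X$ by $\psi(x,S) := \orbit S x$. The image is indeed a tight orbit: for every $\pi\in\Aut S$, Claim~\ref{claim:pisup} yields $\supp{\pi(x)} = \pi(\supp x) \supseteq S$, so $S$ lies in the support of every element of $\orbit S x$. The map $\psi$ is equivariant by the usual conjugation identity $\pi(\orbit S x) = \orbit{\pi(S)}{\pi(x)}$, and it is surjective onto $X$: given any tight $S$-orbit $O \in X$, picking any $x\in O$ gives $(x, S)\in Y$ (tightness forces $S\subseteq \supp x$, and $\orbit S x = O\subseteq B$) with $\psi(x,S) = O$.

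It remains to show that $Y$ is orbit-finite. The first projection $Y \to B$ is equivariant, and for each $x\in B$ its fibre is a subset of the $2^k$-element power set of $\supp x$. Since $B$ consists of a single $T$-orbit, fixing any $b_0\in B$, every $(x, S)\in Y$ is $\Aut T$-equivalent to a pair of the form $(b_0, S')$ with $S'\subseteq \supp{b_0}$; the stabilizer of $b_0$ inside $\Aut T$ then acts on this finite set of at most $2^k$ pairs, producing at most $2^k$ many $T$-orbits of $Y$. Hence $Y$ is orbit-finite, and consequently so is $X$.

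I do not expect any serious obstacle; the only subtle point is the well-definedness of $\psi$ — namely that $\orbit S x\subseteq B$, which may fail when $T\not\subseteq S$ since then $\Aut S$ need not preserve $B$. This is exactly why the definition of $Y$ includes the side condition $\orbit S x\subseteq B$; this side condition is itself equivariant, so it does not jeopardise the orbit-finiteness of $Y$ established above.
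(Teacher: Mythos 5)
Your proof is correct and takes essentially the same route as the paper's short inline argument: a tight orbit in $B$ is determined by one of its elements together with a subset of that element's uniformly bounded support, so the collection of tight orbits is the image of an orbit-finite parametrizing set; you merely make the surjection and the side condition $\orbit S x\subseteq B$ explicit where the paper leaves them implicit. The only nitpick is that, since $B$ is just a $T$-orbit, your $Y$ and $\psi$ are $T$-supported rather than equivariant, which is equally sufficient for preservation of orbit-finiteness.
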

%
%
In the sequel we order tight orbits in $B$ with respect to inclusion.

\para{Definition of the basis}

We define $\basisof B$ as the set of characteristic vectors of all tight orbits $O \subseteq B$:
\[
\basisof B \ := \ \setof{ \constvr 1 O }{O\subseteq B \text{ a tight orbit}}.
\]
Once $B$ is fixed, the set $\basisof B$ is orbit-finite due to Claim~\ref{claim:tight}.
Since every singleton is a tight orbit, $\constvr 1 b\in\basisof B$ for every $b\in B$;
informally speaking, $\basisof B$ extends $B$.

\begin{slexample}
Continuing Example~\ref{ex:tight}, where $B = \otu \A 2$, 
the basis vectors are the following ones:
\begin{align*}
\constvr 1 {}\qquad
\constvr 1 {\a \_}\qquad
\constvr 1 {\_ \b}\qquad
\constvr 1 {\a \b},
\end{align*}
for any non-equal $\a,\b\in\A$.
\end{slexample}

It now remains to argue that $\basisof B$
spans the whole space $\GLin B$,
and that it is linearly independent.

\para{Spanning}
Given a subset $S\subseteqfin \A$ such that $T\subseteq S$, we distinguish the set of all tight $S'$-orbits
for $T\subseteq S' \subseteq S$:
\[
\TO T S \ := \ \setof{O}{O\subseteq B \text{ a tight $S'$-orbit}, T\subseteq S'\subseteq S}.
\]
For every fixed $S$ the set $\TO T S$ is finite since, due to Claim~\ref{claim:orbitsup}, 
$B$ includes only finitely many $S'$-orbits for every fixed $S'\subseteqfin \A$.

%
%

We prove that $\basisof B$ spans the whole space, i.e., each vector is a linear combination of vectors from $\basisof B$.
%
To this aim we fix a finite subset $S\subseteq \A$ such that $T\subseteq S$ and prove that 
every $S$-supported vector $\vr v$ is a linear combination of vectors from 
\[
\basisof B_S \ = \ \setof{\constvr 1 O}{O\in \TO T S} \ \subseteq \ \basisof B.
\]
For every fixed $S$ the set $\basisof B_S$ is finite, as $\TO T S$ is so.

%
%
\begin{lemma}[Spanning] \label{lem:lincomb}
Let $S \subseteqfin \A$ such that $T\subseteq S$.
Each $S$-supported vector $\vr v\in\GLin B$ is a linear combination of vectors from $\basisof B_S$.
\end{lemma}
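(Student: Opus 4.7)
The plan is to combine Lemma~\ref{lem:1O}(ii) with an induction on the $S$-atom dimension. By Lemma~\ref{lem:1O}(ii), $\vr v$ is a finite linear combination of characteristic vectors $\constvr 1 O$ as $O$ ranges over the $S$-orbits of $B$, so it suffices to show that each such $\constvr 1 O$ lies in $\Span{}{\basisof B_S}$. I will prove this by induction on $d := \dimm S O$. For the base case $d = 0$, any $x \in O$ has $\supp x \subseteq S$, so every $\pi \in \Aut S$ fixes $x$ and hence $O = \{x\}$. Singletons are tight, and $T \subseteq \supp x \subseteq S$ by Claim~\ref{claim:supvect}, so $O \in \TO T S$ and $\constvr 1 O \in \basisof B_S$ directly.

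For the inductive step, fix $x \in O$ and set $S_O := \supp x \cap S$; this set is independent of the chosen $x \in O$, because any $\pi \in \Aut S$ fixes $S$ pointwise and so $\supp{\pi(x)} \cap S = \pi(\supp x) \cap S = \supp x \cap S$. Let $O^*$ be the $S_O$-orbit of $x$. For every $y = \pi(x) \in O^*$ with $\pi \in \Aut{S_O}$ we have $S_O = \pi(S_O) \subseteq \pi(\supp x) = \supp y$, so $O^*$ is tight; moreover $T \subseteq S_O$ since $T \subseteq \supp x$ and $T \subseteq S$. Hence $O^* \in \TO T S$. Because $\Aut S \subseteq \Aut{S_O}$, we have $O \subseteq O^*$, and $O^*$ decomposes into finitely many $S$-orbits, say $O \uplus O_1 \uplus \ldots \uplus O_m$. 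The crucial intermediate claim is that every $y \in O_i$ satisfies $\supp y \cap S \supsetneq S_O$; since $|\supp y| = |\supp x|$ (elements of the same $S_O$-orbit have supports of equal size), this forces $\dimm S {O_i} = |\supp y| - |\supp y \cap S| < |\supp x| - |S_O| = d$. Once this is in hand, the induction hypothesis gives $\constvr 1 {O_i} \in \Span{}{\basisof B_S}$ for each $i$, and the identity $\constvr 1 O = \constvr 1 {O^*} - \sum_{i=1}^m \constvr 1 {O_i}$ together with $\constvr 1 {O^*} \in \basisof B_S$ completes the step.

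The main obstacle is establishing the strict inclusion $\supp y \cap S \supsetneq S_O$ for $y \in O^* \setminus O$, which I plan to prove contrapositively. Suppose instead that $\supp y \cap S = S_O$ for some $y = \pi(x) \in O^*$ with $\pi \in \Aut{S_O}$. Then $\pi$ restricted to $\supp x \setminus S_O$, a subset of $\A \setminus S$, is a bijection onto $\supp y \setminus S_O$, which under our assumption is also a subset of $\A \setminus S$; and $\pi$ fixes $S_O$ pointwise. Hence $\pi$ restricted to $\supp x$ can be extended to some $\sigma \in \Aut S$: let $\sigma$ agree with $\pi$ on $\supp x$, be the identity on $S$, and act by an arbitrary bijection on the remaining atoms of $\A \setminus S$. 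Since $\sigma$ agrees with $\pi$ on the support of $x$, we conclude $\sigma(x) = \pi(x) = y$, placing $y \in O$, a contradiction. Everything else amounts to unfolding the definitions of tight orbit, support, and the set $\TO T S$.
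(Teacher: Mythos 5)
Your proof is correct and follows essentially the same route as the paper's: you form the same tight $S'$-closure $O^*$ of an $S$-orbit $O$ (with $S' = \supp{x} \cap S$) and establish the same key fact --- that the remaining $S$-orbits inside $O^*$ have strictly smaller $S$-atom dimension --- by the same automorphism-extension argument as the paper's Claim~\ref{claim:ind}; you merely reorganize the induction, first reducing to characteristic vectors of single $S$-orbits via Lemma~\ref{lem:1O}(ii) and then inducting on the atom dimension of one orbit, instead of the paper's induction on a well-founded order on orbit-domains. One cosmetic slip: in the base case, $T \subseteq \supp{x}$ for $x \in B$ follows from the standing assumption that $B$ is tight, not from Claim~\ref{claim:supvect}, which concerns vectors in $\GLin{B}$ rather than elements of $B$.
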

\begin{proof}
Let $\vr v\in\GLin B$ and $S\subseteqfin\A$ such that $\supp{\vr v}\subseteq S$. 
By Lemma~\ref{lem:1O}(i), $\vr v$
is constant when restricted to every $S$-orbit $O$; we may thus write $\vr v(O)$ to denote this constant value.
We naturally define the \emph{$S$-orbit-domain} of $\vr v$ as follows:
\[
\odom {\vr v} \ := \ \setof{O}{O\subseteq B \text{ an $S$-orbit}, \ \vr v(O)\neq 0}.
\]
For two $S$-supported vectors  $\vr w, \vr w' \in \GLin B$, we write
$\vr w \prec \vr w'$ if $\odom{\vr w}$ is obtained from $\odom {\vr w'}$ by removing one $S$-orbit
and replacing it by arbitrarily many $S$-orbits of strictly smaller $S$-atom dimension. 

We define a representation of $\vr v$ in basis $\basisof B$  
by structural induction with respect to the transitive closure of $\prec$.
Concerning the induction base, if $\odom {\vr v}$ is empty
then $\vr v$ is the zero vector and the claim holds vacuously.
Otherwise, suppose the claim holds for all strictly smaller vectors $\vr w$.
Take an $S$-orbit $O\in\odom{\vr v}$ of maximal $S$-atom dimension. 
Let 
\begin{align}\label{eq:S'}
S' := \supp x \cap S
\end{align}
for some (every) $x\in O$.
Note that $T\subseteq S'$ as $T\subseteq \supp x$ (since $B$ is tight) and $T\subseteq S$
(by Claim~\ref{claim:supvect}).
We define the $S'$-orbit $O'$ as $S'$-closure of $O$:
\[
O' \ := \ \setof{\pi(x)}{x\in O, \ \pi \in\Aut {S'}}.
\]
By definition, $S'$ is included in the support of every element of $O'$, therefore the orbit $O'$ is tight,
and hence $\constvr 1 {O'}\in \basisof B_S$.
As $S'\subseteq S$, every $S$-orbit in $B$ is either included in $O'$ or disjoint from it, and hence
$O'$ is a finite union of $S$-orbits.
We claim that $O$ has the largest $S$-atom dimension among all $S$-orbits included in $O'$:
\begin{claim} \label{claim:ind}
For every $S$-orbit $M$ included in $O'$ but different than $O$,
we have $\dimm S M < \dimm S O$.
\end{claim}
\begin{claimproof}
Recall that $S'\subseteq S\cap \supp x$ for every $x\in O'$.

Consider the subset $N\subseteq O'$ containing those
elements $x\in O'$ for which $S' = \supp x \cap S$.
By the definition of $S'$~\eqref{eq:S'} we have $O\subseteq N$. 
We prove $N\subseteq O$, by showing that
every element $y \in N$ is related by an $S$-atom automorphism to some element of $x\in O$.
Indeed, consider any $x\in O$ and $y = \pi'(x)$ for any $\pi'\in \Aut {S'}$ such that $y \in N$. We have
\[
S' \ = \ \supp x \cap S \ = \ \supp {\pi'(x)} \cap S
\]
and hence there is some $\pi\in \Aut S$, possibly different than $\pi'$, that coincides with $\pi'$ on $\supp x$,
which implies $y = \pi(x)$, as required.
The two inclusions imply $N=O$.

Finally, for all  $x\in O'\setminus N = O' \setminus O$ we have
$S' \subsetneq \supp x \cap S$, which implies that each $S$-orbit $M\subseteq O'$ different than $O$ has strictly
smaller $S$-atom dimension than $O$.
\end{claimproof}

Consider the vector
\begin{align} \label{eq:v'pie}
\vr w \quad := \quad \vr v \ - \ \vr v(O) \cdot \constvr 1 {O'}.
\end{align}
Note that $\vr w$ is supported by $S$ as both $\vr v$ and $\constvr 1 {O'}$ are so, and
$\vr w(O) = 0$.
By Claim~\ref{claim:ind} we infer that $\vr w \prec \vr v$
and therefore by the induction assumption $\vr w$ is a linear combination of vectors from $\basisof B_S$. 
By~\eqref{eq:v'pie} we deduce the same for $\vr v$.
This completes the proof of Lemma~\ref{lem:lincomb}.
 \end{proof}

\para{Linear independence}
We rely on the following property of tight orbits (not true for arbitrary orbits):
\begin{claim} \label{claim:inclcup}
If orbits $O, O_1, \ldots, O_n$ are tight and
$O \subseteq O_1 \cup \ldots \cup O_n$ then
$O\subseteq O_i$ for some $i=1, \ldots, n$.
\end{claim}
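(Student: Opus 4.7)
My plan is to prove the claim by picking an atom automorphism in $\Aut S$ that pushes the ``free atoms'' of a fixed element $x \in O$ away from all the tight supports of the covering orbits. This freshness will then force the support witnessing tightness at the image to be contained in $S$, after which $O \subseteq O_k$ for some $k$ follows automatically.

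First, I would set $S := \supp{O}$ and, for each $j$, $S_j := \supp{O_j}$; note that $O$ is then a tight $S$-orbit and each $O_j$ is a tight $S_j$-orbit. Tightness gives $S \subseteq \supp x$ for every $x \in O$ and $S_j \subseteq \supp y$ for every $y \in O_j$. I fix any $x \in O$ and write $U := \supp x \setminus S$ for its free atoms; the set $U$ is finite.

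Second, I would choose $\pi \in \Aut S$ sending $U$ into $\A \setminus (S \cup S_1 \cup \ldots \cup S_n)$. Such $\pi$ exists because $U$ is finite while the target set is cofinite in $\A$, so any injection $U \to \A \setminus (S \cup S_1 \cup \ldots \cup S_n)$ extends to a bijection of $\A$ fixing $S$ pointwise. Set $y := \pi(x) \in O$. Since $\pi$ fixes $S$ and $\supp x = S \cup U$, we obtain $\supp y = \pi(\supp x) = S \cup \pi(U)$, with $\pi(U)$ disjoint from every $S_j$ by construction.

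Third, by the covering hypothesis $y \in O_k$ for some $k$. Tightness of $O_k$ yields $S_k \subseteq \supp y = S \cup \pi(U)$; combined with $S_k \cap \pi(U) = \emptyset$, this forces $S_k \subseteq S$. Consequently $\Aut S \subseteq \Aut{S_k}$, so $O_k$ is $\Aut S$-invariant; since $y \in O_k$ and $O$ is a single $S$-orbit containing $y$, we conclude $O = \set{\sigma(y) : \sigma \in \Aut S} \subseteq O_k$, as desired.

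The only substantive step is the construction of $\pi$: tightness of $O$ is what allows us to write $\supp y = S \cup \pi(U)$ cleanly, and tightness of each $O_k$ is what converts the abstract membership $y \in O_k$ into the useable support inclusion $S_k \subseteq \supp y$. Everything else is routine bookkeeping.
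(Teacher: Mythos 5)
Your proof is correct and rests on the same mechanism as the paper's: push the free atoms of an element of $O$ to fresh positions, then use tightness of the covering orbit $O_k$ to force $S_k \subseteq S$, and finally use $\Aut{S}$-invariance of $O_k$ to absorb all of $O$. The only (welcome) difference is that by choosing a single automorphism whose image of $U$ avoids all of $S_1,\ldots,S_n$ at once, you sidestep the paper's pigeonhole argument over two images $\pi(R),\pi'(R)$ and its separate treatment of singleton versus infinite orbits.
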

\begin{claimproof}
If $O$ is a singleton then the claim holds vacuously.
Relying on Claim~\ref{claim:orbitfin} we may thus assume that $O$ is infinite.

Suppose $O \subseteq O_1 \cup \ldots \cup O_n$ for a tight $S$-orbit $O$ and arbitrary tight orbits
$O_1, \ldots, O_n$.
Take any $x\in O$ and let $R := \supp x \setminus S$.
Consider elements $\pi(x)\in O$ for all $S$-atom automorphisms $\pi$, thus ranging over all elements of 
the orbit $O$.
At least one of the orbits $O_1, \ldots, O_m$, say the $S_1$-orbit $O_1$, necessarily contains $\pi(x)$ and $\pi'(x)$, for 
some two $S$-atoms automorphisms $\pi, \pi'$, such that the sets $\pi(R)$ and $\pi'(R)$ 
are disjoint.
By tightness of $O_1$ (and relying on Claim~\ref{claim:pisup}) we get $S_1 \subseteq \supp{\pi(x)} = S \cup \pi(R)$ and
$S_1 \subseteq \supp{\pi'(x)} = S \cup \pi'(R)$, and hence $S_1 \subseteq S$, 
which implies  $\pi(x)\in O_1$ for all $S$-atom automorphisms $\pi$, i.e., $O\subseteq O_1$.
\end{claimproof}

We now argue that the set $\basisof B$ is linearly independent.
Towards contradiction, suppose that the zero vector is obtainable as a linear combination of basis vectors
\begin{align} \label{eq:linind}
q_{1} \cdot \constvr 1 {O_1} \ + \ \ldots \ + \  
q_{n} \cdot \constvr 1 {O_n} \quad = \quad \zerovec,
\end{align}
for some tight pairwise-different orbits $O_1, \ldots, O_n \subseteq B$ and 
$q_{1}, \ldots, q_{n} \in \Q\setminus\set{0}$.
Take any inclusion-maximal orbit among $O_1, \ldots, O_n$, say $O_1$.
We distinguish two cases.


\paragraph{1} 
If $O_1 \subseteq O_2 \cup \ldots \cup O_n$ then
using Claim~\ref{claim:inclcup} we arrive at a contradiction with the inclusion-maximality of $O_1$.


\paragraph{2} 
Otherwise $O_1 \not\subseteq O_2 \cup \ldots \cup O_n$.
Taking any $x \in O_1 \setminus (O_2 \cup \ldots \cup O_n)$ we derive a contradiction, as the  value 
of the left-hand side of~\eqref{eq:linind}  on $x$ is non-zero:
\[
\left(q_{1} \cdot \constvr 1 {O_1} \ + \ \ldots \ + \  
q_{n} \cdot \constvr 1 {O_n}\right) (x) \ = \ q_1 \ \neq \ 0,
\]
while the value of the right-hand side is $\zerovec(x) = 0$. 


\section{Decidability of finitary solvability} \label{sec:fin-solv}

In this section we prove decidability of the finitary solvability problem.

\prob{\finsolvname$(\Q)$}
{an orbit-finite system of linear equations} 
{does it have a finitary solution} 

\begin{theorem} \label{thm:fin-solv}
\finsolvname$(\Q)$ is decidable for every fixed effective commutative ring $\Q$.
\end{theorem}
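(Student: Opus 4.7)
The plan is to reduce \finsolvname$(\Q)$ to a single finite system of linear equations over $\Q$, which is decidable by effectiveness of $\Q$. The reduction combines the Orbit-Finite Basis Theorem (Theorem~\ref{thm:basis}) with a generalisation of the cog technology of~\cite{BKM21,HR21}.

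First I would pass to basis coordinates. Invoking Theorem~\ref{thm:basis}, fix an orbit-finite basis $\basisof B$ of $\GLin B$. Every $\vr v \in \GLin B$ then has a unique representation as a finitary $\Q$-linear combination of basis vectors, i.e., as a finitary coordinate vector $\emb{\vr v} \in \Lin{\basisof B}$. In particular the target $\vr t$ acquires a finite coordinate vector $\emb{\vr t}$, and each column $\vr A(\_,c)$ acquires a finite coordinate vector $\emb{\vr A(\_,c)}$. Finitary solvability now reads: is $\emb{\vr t}$ a finitary $\Q$-linear combination of the orbit-finite family $\setof{\emb{\vr A(\_,c)}}{c\in C}$ of finite coordinate vectors?

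Next I would collapse this orbit-finite family to a finite one using cogs. Let $S_0 \subseteqfin \A$ support $\vr A$, $\vr t$, and $\basisof B$. For each $S_0$-orbit $O\subseteq C$ and each relevant finite set $D$ of atoms, define a cog $\cog O D \in \Lin C$ as the canonical finitary sum $\sum_{c \in O,\ \supp c = D} c$; such a sum is finite by Claim~\ref{claim:orbitsup}. The key lemma to establish is twofold: (i) for each orbit $O$, the finitary span of $\setof{\vr A(\_,c)}{c \in O}$ equals the $\Q$-linear span of the cog images $\setof{\vr A \cdot \cog O D}{D}$; and (ii) modulo the $\Aut{S_0}$-action on $D$, only finitely many distinct coordinate vectors $\emb{\vr A \cdot \cog O D}$ arise. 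Collected over all $S_0$-orbits of $C$, these yield an effectively computable finite set $\Gamma \subseteq \Lin{\basisof B}$.

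Finally, the problem reduces to testing whether $\emb{\vr t}$ lies in the $\Q$-linear span of $\Gamma$, which is a finite system of linear equations over $\Q$, decidable by effectiveness. The main obstacle is the cog step: proving, for arbitrary atom dimensions of $C$, both that cogs generate the finitary span orbit-wise and that their coordinate images collapse to finitely many equivalence classes. This requires exploiting the internal $\Aut{S_0}$-symmetry of each orbit of $C$ jointly with the structure of $\basisof B$ (composed of characteristic vectors of tight orbits), so that coordinate values depend only on the $\Aut{S_0}$-orbit type of $D$ rather than on $D$ itself.
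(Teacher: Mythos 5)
Your opening move (re-representing the columns and the target in the orbit-finite basis $\basisof B$ so that everything becomes finitary over $\basisof B$) matches the paper's first step exactly. But the collapse to a finite set $\Gamma$ is where the argument breaks. Part (ii) of your key lemma is false: take $B = C = \A$ with $\vr A$ the identity matrix, so $\vr A(\_,c) = \constvr 1 c$. For the $S_0$-orbit $O = \A\setminus S_0$ and $D = \set{\g}$, your cog is the single column $\g$ and its coordinate image is $\constvr 1 \g$ itself; these are pairwise distinct vectors for distinct $\g$, lying in one $\Aut{S_0}$-orbit but not equal as vectors. More importantly, replacing an orbit of vectors by finitely many orbit representatives does not preserve the span: $\Span{}{\setof{\constvr 1 \g}{\g\in\A\setminus S_0}}$ is infinite-dimensional, while the span of any finite $\Gamma$ is finite-dimensional. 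This is precisely the core difficulty of the problem --- an orbit-finite set of finitary vectors spans an infinite-dimensional subspace of $\Lin{\basisof B}$, and membership of $\emb{\vr t}$ in it cannot be decided by a single membership test against a finite spanning set. Part (i) is also doubtful as stated: the symmetrized sums $\sum_{c\in O,\, \supp c = D} c$ generally generate a proper subspace of the finitary span of the individual columns $\vr A(\_,c)$, $c\in O$.

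The paper's route is structurally different. It does not collapse the column set; instead it (a) introduces \emph{local solvability} --- for every $k$-set $A$, the restriction of the system to the rows $\otu A k$ of the main component must be solvable; this is a family of necessary finite conditions, reducible to finitely many finite systems up to orbits because the \emph{target} is one fixed vector --- and (b) proves a dimension-reduction lemma (Lemma~\ref{lem:decr-dim}): assuming local solvability, one subtracts from each column and from the target an alternating inclusion-exclusion sum over renamings (the paper's cogs $\sum_{I\subseteq A}(-1)^{|I|}\sigma_I(\vr w)$, which are quite different from your symmetrizations and are shown to lie in the span of the columns), obtaining an equisolvable instance of strictly smaller atom dimension. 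Iterating down to atom dimension $0$ leaves finitely many columns and hence a genuinely finite system. To salvage your plan you would need a mechanism that accounts for solutions using arbitrarily many distinct columns from a single orbit with different coefficients; that is exactly what the local-solvability checks plus the induction on atom dimension provide.
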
 


Let $\vr A \in \GLin{B \times C}$ and $\vr t\in\GLin B$ be the input.
We need to check if $\vr t \in$ $\Span {} {\vr A}$, or equivalently
$\vr t \in \Span {} P$, where $P = \setof{\vr A(\_, c)}{c\in C}$ is an orbit-finite set of vectors from $\GLin B$.
As $P$ can be computed from $\vr A$, 
from now on we assume we are given $P$ and $\vr t$.

\para{Simplifying assumptions}

First, for simplicity of presentation
we assume that $P$ (but not $\vr t$) is equivariant; hence also $B$ is forcedly so.
%
%
%
%
%
%
%
%

We further assume w.l.o.g.~that 
all vectors are finitary: $P\subseteq \Lin B$ and $\vr t\in\Lin B$.
Indeed, 
according to Remark~\ref{rem:effect}
we may compute an orbit-finite basis $\basisof B$ of $\GLin B$, and then compute 
the representations $P'\subseteq \Lin {\basisof B}$ and $\vr t'\in\Lin{\basisof B}$ of $P$ and $\vr t$ in this basis.
As $\basisof B$ is a basis, the representation preserves solvability: $\vr t\in\Span {} P$ if, and only if $\vr t'\in\Span{B'}{P'}$.

Finally, we assume w.l.o.g.~that $B$ is \emph{straight}, by which we mean that
each of its orbits
is in equivariant bijection with  $\otu \A k$ for some $k\in\Nat$.
By Lemma~\ref{lem:repr}, each (equivariant) orbit in $B$ is in equivariant bijection with
$\faktor{\A^{(k)}}{G}$ for some $k \in \Nat$ and some subgroup $G\leq \symg k$.
The vector space $\GLin{\faktor{\otu \A k}{G}}$ is, in turn, in equivariant bijection 
with the subspace of all \emph{$G$-invariant} vectors in
$\GLin{\otu \A k}$, i.e.~vectors $v : \otu \A k \to \Q$
satisfying $v(a \circ \sigma) = v(a)$ for every $a\in \A^{(k)}$ and $\sigma\in G$.
This yields the embedding
\[
\iota : \GLin{\faktor{\otu \A k}{G}} \to \GLin{\otu \A k}
\]
given by pre-composing with the canonical quotient 
$\quot G : \otu \A k \to \faktor{\otu \A k}{G}$,
\begin{align} \label{eq:emb}
\vr v \quad \mapsto \quad \iota(\vr v) \ = \ \vr v \circ \quot G.
\end{align}
The embedding $\iota$ extends to $\GLin B \to \GLin{B'}$, where $B'$ is the disjoint union of straight orbits 
corresponding to orbits of $B$.
The embedding is efficiently computable, and preserves linear combinations and finitariness.
By the latter property we may restrict $\iota$ to finitary vectors, namely
$\iota : \Lin B \to \Lin {B'}$.
Therefore, writing $P'$ and $\vr t'$ for $\iota(P)$ and $\iota(\vr t)$, respectively, we deduce that
$\vr t \in \Span {} P$ if and only if $\vr {t'} \in \Span {} {P'}$.

Summing up, by an \emph{instance} of the problem we mean a triple $(V, P, \vr t)$ consisting of
a vector space $V = \Lin B$ generated by an equivariant straight orbit-finite set $B$,
an equivariant orbit-finite subset  $P \subseteq V$, and a vector $\vr t\in V$.
The instance is \emph{solvable} if $\vr t \in \Span {} {P}$.
%

\para{Canonical form}
Recall that the atom dimension of the orbit $\otu \A k$ is $k$.
Up to an equivariant bijection, we may present $B$ as a disjoint union 
$B = B_1 \uplus \ldots \uplus B_n$ where $B_i = \otu \A {p_i}$ for some $p_i\in\Nat$, for $i = 1,\ldots, n$.
Therefore the vector space $\Lin B$ is equivariantly isomorphic to
\[
(\otu \A {p_1} \tofin \Q) \times \ldots \times (\otu \A {p_n} \tofin \Q).
\]
For convenience we prefer to work with vector spaces in the following canonical form, where 
all orbits $\otu \A {p}$ of the same atom dimension $p$ are grouped together:
\begin{align} \label{eq:canform}
V \ = \ (\otu \A {k_1} \tofin \Q^{\ell_1}) \times
\ldots \times
(\otu \A {k_m} \tofin \Q^{\ell_m}),
\end{align}
where $k_1, \ldots, k_m$ are pairwise different nonnegative integers, and $\ell_1, \ldots, \ell_m$ are arbitrary positive integers.
A vector space $V$ in canonical form~\eqref{eq:canform} is thus the Cartesian product of $m$ \emph{components}.
The definition of  domain naturally extended to vectors of the form $\vr v : \otu \A k \tofs \Q^\ell$ as follows:
\[
\dom{\vr v} = \setof{a\in\otu \A k}{\vr v(a) \neq (0, \ldots, 0) \in \Q^\ell}.
\]
\begin{definition} \label{def:ad}
By the \emph{atom dimension} of a vector space $V$ in canonical form~\eqref{eq:canform} 
we mean the maximum among atom dimensions 
of orbits $\otu \A {k_i}$, i.e., $\max(k_1, \ldots, k_m)$.
\end{definition}
The component ${\comp V i = \otu \A {k_i} \tofs \Q^{\ell_i}}$ of largest atom dimension 
we call the \emph{main component} of $V$ and denote as $\main{V}$.
Assuming w.l.o.g.~$i = 1$ (the main component is the first one) we may write
\[
V \ = \ \main{V} \times V'
\]
where $V'$ is the Cartesian product of all non-main components.
Thus every vector $\vr v \in V$ decomposes as a pair 
\begin{align}\label{eq:dec}
\vr v = (\main{\vr v}, \vr v') \in \main V\times V'.
\end{align}
Furthermore,
$\main V$ embeds into $V$ as the subspace 
$\main V \times \set{\zerovec}\times \ldots \times \set{\zerovec}$, where $\zerovec : \otu \A {k_i} \tofs \Q^{\ell_i}$ maps
every tuple $a\in\otu \A {k_i}$ to $(0, \ldots, 0) \in \Q^{\ell_i}$,
and likewise $V'$ embeds into $V$.
Using the embeddings implicitly, we may write
\begin{align} \label{eq:plus}
\vr v \ = \ \main{\vr v} + \vr v'
\end{align}
in place of~\eqref{eq:dec}.

Summing up, instances $(V, P, \vr t)$ are assumed from now on to consist of 
a vector space $V$ in canonical form~\eqref{eq:canform}.

%
%

\para{Locally solvable instances}
We distinguish \emph{locally solvable} instances $(V, P, \vr t)$, defined as follows.
Let $\main V = \otu \A k \tofs \Q^\ell$ be the main component (for succinctness of notation  
we write $k, \ell$ instead of $k_1, \ell_1$).
We use the restriction operation: for $X\subseteq \otu \A k$ and $\vr w : \otu \A k \tofs \Q^\ell$ we define
\[
\restr {\vr w} X(a) = \begin{cases}
\vr w(a) & \text{ if } a \in X\\
\zerovec & \text{ otherwise.}
\end{cases}
\]
Given a $k$-set $A\in\utu \A k$,  we may consider the \emph{$A$-restriction} 
$(\main V, P', \vr t')$
of the instance, where
\[
P' = \setof{\restr{\main{\vr v}} {\otu A k}}{\vr v \in P} \qquad
\vr t' = \restr{\main{\vr t}} {\otu A k}.
\]
Thus the $A$-restriction is essentially a finite system of at most $\size{\otu A k} = k!$ equations.
Any restriction of a solvable instance is solvable too.
An instance is called \emph{locally solvable} if each of its $A$-restrictions is solvable, for every 
$A\in \utu \A k$.
Clearly, each solvable instance is locally solvable, but the opposite implication is not true in general
(one of the reasons is that local solvability only refers to the main component).

\begin{restatable}{claim}{claimLocSolvDec}
\label{claim:loc-solv-dec}
Local solvability is decidable. 
\end{restatable}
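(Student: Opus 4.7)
The plan is to reduce local solvability to finitely many classical finite-dimensional linear systems over $\Q$. Let $T = \supp{\vr t}$. For any $\pi \in \Aut T$ and any $A \in \utu \A k$, equivariance of $P$ gives $\pi(P) = P$, while the $T$-support of $\vr t$ gives $\pi(\vr t) = \vr t$; hence the bijection induced by $\pi$ between $\otu A k$ and $\otu{\pi(A)}{k}$ identifies the $A$-restriction of the instance with its $\pi(A)$-restriction as finite linear systems, so solvability is invariant under $\pi$. Therefore it suffices to decide solvability of the $A$-restriction for one representative $A$ in each of the finitely many $T$-orbits of $\utu \A k$, and these representatives are effectively computable from the finite presentation of $\vr t$.

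Fix such a representative $A$. The $A$-restriction lives in the ambient space $\otu A k \to \Q^{\ell}$, which is isomorphic to $\Q^{\ell \cdot k!}$, and the target $\vr t' = \restr{\main{\vr t}}{\otu A k}$ is directly computable. The main obstacle, which requires the most care, is to show that the column set $P' = \setof{\restr{\main{\vr v}}{\otu A k}}{\vr v \in P}$ is effectively a \emph{finite} set, despite $P$ itself being orbit-infinite. Write $P = O_1 \cup \ldots \cup O_r$ as the finite union of its equivariant orbits with representatives $\vr v_1, \ldots, \vr v_r$, so that every element of $P$ has the form $\pi(\vr v_i)$ for some $i$ and $\pi \in \Aut{}$. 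By Lemma~\ref{lem:1O}(i) applied component-wise, $\main{\pi(\vr v_i)}$ is constant on $\pi(\supp{\vr v_i})$-orbits of $\otu \A k$, so its value at a tuple $a \in \otu A k$ depends only on the equality type of $a$ relative to $\pi(\supp{\vr v_i}) \cap A$. Since $A$ and $\supp{\vr v_i}$ are both finite, only finitely many such types can arise; concretely, they can be enumerated by iterating over all partial injections from $\supp{\vr v_i}$ into $A$. Hence each orbit $O_i$ contributes only finitely many distinct vectors to $P'$, and $P'$ is effectively a finite set.

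With $\vr t'$ a single vector and $P'$ a finite set of vectors, both explicitly represented in $\Q^{\ell \cdot k!}$, deciding $\vr t' \in \Span{}{P'}$ is a standard finite system of linear equations over $\Q$, hence decidable by the standing effectiveness assumption on $\Q$. Running this decision for each of the finitely many representatives $A$ completes the algorithm for local solvability.
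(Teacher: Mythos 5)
Your proposal is correct and follows essentially the same route as the paper's proof: reduce to one representative $k$-set $A$ per $T$-orbit of $\utu \A k$ (finitely many, using invariance of the instance under $\Aut T$), and decide each resulting restriction as a classical finite linear system over $\Q$. Your additional argument that the restricted column set $P'$ is a \emph{finite}, effectively computable set (via partial injections from the supports of orbit representatives into $A$) is a genuine detail that the paper's terse proof leaves implicit, and it is handled correctly.
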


We later make use of the fact that for any two different (but not necessarily disjoint) $k$-sets
$A, A'\in\utu \A k$, the sets
$\otu A k$ and $\otu {(A')} k$ are always disjoint.

%

\para{Reduction of atom dimension}
The following lemma is the core of the proof of Theorem~\ref{thm:fin-solv}:
\begin{lemma} \label{lem:decr-dim}
Given a locally solvable instance $(V, P, \vr t)$ as above,
one may construct another 
instance $(\mdf V, \mdf P, \mdf{\vr t})$ 
where atom dimension of $\mdf V$ is strictly smaller  
than that of $V$, and such that $\vr t \in \Span {} P$ if and only if \,$\vr {\mdf t} \in \Span {} {\mdf P}$.
\end{lemma}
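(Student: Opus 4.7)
The plan is to use the technique of \emph{cogs}: finitary linear combinations of columns that absorb the main component $\main V = \otu \A k \tofs \Q^\ell$ of $\vr t$ into the residual components $V'$. Local solvability is precisely what guarantees cogs exist. For each orbit-representative $k$-set $A \in \utu \A k$, local solvability yields a finitary $\vr x_A : P \tofin \Q$ such that $\sum_{\vr v \in P} \vr x_A(\vr v)\,\main{\vr v}$ matches $\main{\vr t}$ on $\otu A k$; this is a finite linear system of size at most $k! \cdot \ell$ over the finitely many columns whose main support meets $\otu A k$. The cog and its atom-permutation translates then serve as the building blocks for the reduced instance.

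Concretely, I would set $\mdf V := V'$, which by Definition~\ref{def:ad} has strictly smaller atom dimension than $V$. The set $\mdf P$ would consist of the non-main projections of all columns $\vr v \in P$, augmented with an orbit-finite family of \emph{cog residuals}: the non-main parts of the combinations $\sum \vr x_A(\vr v)\,\vr v'$, indexed equivariantly over the orbit of $A$. The new target $\mdf{\vr t}$ is obtained from the non-main part of $\vr t$ by subtracting a finite combination of cog residuals corresponding to the finitely many translates needed to cover $\dom{\main{\vr t}}$; since $\main{\vr t}$ is finitary, only finitely many translates are required and their combined support stays finite.

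For equisolvability I would argue both directions. Given a finitary solution $\mdf{\vr x}$ of the reduced instance, I would add back the finite collection of cog translates used in defining $\mdf{\vr t}$ to reconstruct a finitary solution $\vr x$ of the original instance. Conversely, given a global finitary solution $\vr x$, subtracting cog contributions aligned with $\vr x$'s main-component support yields a finitary solution of the reduced instance in $V'$. The main obstacle is twofold: first, the canonical, equivariant selection of $\vr x_A$ so that its atom-permutation translates form orbit-finite data for $\mdf P$; and second, careful control of \emph{spillover} --- cog translates for different $k$-sets $A, A'$ may perturb each other's main-component contributions on $\otu{A'} k$ and $\otu A k$. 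Resolving both is precisely the generalisation of the cogs of~\cite{BKM21,HR21} announced in the outline, and I expect it to rely on the Orbit-Finite Basis Theorem applied to the main-component span, to extract canonical local solutions whose translates interact in a controlled, orbit-finite way.
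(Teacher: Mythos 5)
There is a genuine gap, and it sits exactly where you park the difficulty. The fatal step is $\mdf V := V'$, i.e.\ discarding the main component outright and generating $\mdf P$ from the non-main projections $\vr v'$ of the columns. Once the raw projections $\vr v'$ are admissible generators, a finitary solution of the reduced instance may combine them with coefficients whose corresponding combination of the \emph{full} columns $\vr v$ matches $\main{\vr t}$ on none of the blocks $\otu A k$ outside the finitely many you patched via cog residuals; nothing in $V'$ remembers that constraint, so the ``if'' direction of equisolvability fails. The paper's $\mdf V$ is not $V'$: it keeps the part of the main component supported on $\Aul k S$, the tuples meeting a fresh $k$-set $S$ of atoms (after first shrinking $P$ to the columns $P'$ whose support avoids $S$). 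Each column is replaced by $\vr v - \Deltaw{\main{\vr v}}$, whose main component is not zero but is pushed into $\Aul k S$; this retained $S$-touching part is precisely what carries the residual main-component constraints into the next round. The atom dimension still drops because every $S$-orbit of $\Aul k S$ has $S$-atom dimension strictly below $k$, which is recovered by re-expressing $\mdf V$ in canonical form at the end --- the dimension reduction is a consequence of the construction, not its starting point.

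Your two ``obstacles'' are also resolved differently than you expect. The spillover between distinct $k$-sets $A, A' \subseteq \A\setminus S$ is a non-issue by the very shape of the cog: $\gcog{\restr{\vr w}{\otu A k}}{\sigma^\prec_A}$ restricts to $\restr{\vr w}{\otu A k}$ on $\otu A k$, and every other tuple in its domain contains an atom of $S$, so distinct $A$'s never perturb one another and all perturbations land in $\Aul k S$ --- no appeal to the Orbit-Finite Basis Theorem is needed here (that theorem is used elsewhere, for the reduction from $\GLin B$ to finitary vectors and in Section~\ref{sec:solv2fin-solv}). The equivariance issue is handled by letting $\prec$ range over \emph{all} total orders on $\A\setminus S$, yielding finitely many vectors $\vr v - \Deltaw{\main{\vr v}}$ per column and an orbit-finite $\mdf P$. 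Finally, the algebraic facts your sketch never establishes, and on which both directions of Claim~\ref{claim:iff} rest, are: (i) each cog $\gcog{\restr{\vr w}{\otu A k}}{\sigma^\prec_A}$ with $\vr w \in \main{(P')}$ lies in $\Span{}{P'}$, because in the alternating sum $\sum_{I\subseteq A}(-1)^{|I|}\sigma_I(\vr v)$ the non-main and off-$\otu A k$ contributions cancel in pairs; and (ii) local solvability enters only to show $\Deltawp 0 {\main{\vr t}} \in \Span{}{P'}$. Without analogues of (i) and (ii) your reconstruction of a solution of the original instance from one of the reduced instance does not go through.
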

%
\begin{proof}[Proof of Theorem~\ref{thm:fin-solv}]
Using the lemma we prove that the finitary spanning problem 
reduces to solvability of finite systems of linear equations, 
which implies decidability.
First, local solvability of an instance is a necessary condition for solvability, and is decidable by Claim~\ref{claim:loc-solv-dec}.
The algorithm thus checks if the input instance is locally solvable: if it is not so it answers negatively, 
and if it is so the algorithm applies the construction of Lemma~\ref{lem:decr-dim} to produce an instance
of strictly smaller atom dimension.
Continuing so iteratively, the algorithm finally
arrives at $V$ of atom dimension equal to $0$, i.e., at a finitely dimensional vector space $V$.
In this case the set $P$, being an orbit-finite subset of $V$, is necessarily finite too, and the problem amounts to solving a finite system of linear equations.
\end{proof}

We thus concentrate from now on on proving Lemma~\ref{lem:decr-dim}.


\para{Cogs}
We rely on a generalisation of cogs in~\cite{BKM21}
and of simple hypergraphs in~\cite{HR21}.
Let $A, S\in\utu \A k$ be two disjoint subsets of atoms of size $k$, and
let $\sigma : A \to S$ be a bijection.
For every $I\subseteq A$, we define 
%
an injective mapping
\begin{align} \label{eq:sigmaI}
\sigma_I : A \to A\cup S
\qquad\qquad
\sigma_I(\a) = 
\begin{cases}
\a & \text{ if } \a \notin I \\
\sigma(\a) & \text{ if } \a\in I.
\end{cases}
\end{align}
Intuitively, the set $I$ specifies those elements $\a\in A$ that should be replaced by $\sigma(\a)$.
In particular, $\sigma_\emptyset$ is the identity on $A$ and $\sigma_A = \sigma$.  
Let $\vr w : \otu \A k \tofin \Q^\ell$ be a vector satisfying $\dom{\vr w} \subseteq \otu A k$.
In~\eqref{eq:cog} below we implicitly extend $\sigma_I$, in an arbitrary way, to an atom automorphism $\A\to\A$.
A \emph{cog of $\vr w$ via $\sigma$} is the vector $\gcog {\vr w} \sigma : \otu \A k \tofin \Q^\ell$ defined as:
\begin{align}\label{eq:cog}
\gcog {\vr w} \sigma \ = \ \sum_{I\subseteq A} (-1)^{|I|} \cdot \sigma_I (\vr w).
\end{align}
Thus the domain of $\gcog {\vr w} \sigma$ is a finite set of size at most $k!\cdot 2^k$.

\begin{slexample}
Let $k = 2$, $\ell = 1$, $A = \set{\a,\b}\subseteq \A$, and
\[ \vr{w} \ = \ \a\b + 2 \cdot \b\a \ \in \ \otu \A 2 \tofin \Q \ .\]
Let $\sigma : \set{\a,\b} \to \set{\c,\d}$ be defined by $\sigma(\a) = \c$ and $\sigma(\b) = \d$.
Then we have
\begin{align*}
\gcog{\vr{w}} \sigma \ =  \ \quad & \a\b + 2 \cdot \b\a
                       \ \  - \   \c\b - 2 \cdot \b\c \\
                        \, + \,\,& \c\d + 2 \cdot \d\c
                        \ \ \ -  \ \a\d - 2 \cdot \d\a.
\end{align*} 
\vspace{-2mm}
\end{slexample}

\begin{claim} \label{claim:cogw}
Let $\vr w : \otu \A k \to \Q^\ell$ such that $\dom{\vr w} \subseteq \otu A k$.
Then $\restr{\big(\gcog {\vr w} \sigma\big)} {\otu A k} = \vr w$.
\end{claim}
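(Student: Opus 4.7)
The plan is to show that every term in the alternating sum defining $\gcog{\vr w}\sigma$ vanishes after restriction to $\otu A k$, except the single term indexed by $I = \emptyset$, which contributes $\vr w$ itself.

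First I would recall that
\[
\gcog{\vr w}\sigma \ = \ \sum_{I \subseteq A} (-1)^{|I|} \cdot \sigma_I(\vr w),
\]
and that for each $I$ the vector $\sigma_I(\vr w)$ is supported on $\sigma_I(\dom{\vr w})$, where $\sigma_I$ is (silently) extended to some atom automorphism of $\A$. The key point is that this extension is irrelevant for our purposes: since $\dom{\vr w} \subseteq \otu A k$, every tuple $a \in \dom{\vr w}$ consists only of atoms from $A$, so $\sigma_I(a)$ depends on $\sigma_I$ only through its prescribed values on $A$. Hence $\sigma_I(a)$ is a non-repeating tuple whose entries all lie in $(A \setminus I) \cup \sigma(I)$.

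Next I would exploit the disjointness $A \cap S = \emptyset$ from the hypothesis of the cog construction. Since $\sigma : A \to S$, we have $\sigma(I) \subseteq S$, and therefore the tuple $\sigma_I(a)$ lies in $\otu A k$ if and only if $\sigma(I) = \emptyset$, i.e.\ $I = \emptyset$. Consequently, for every nonempty $I \subseteq A$, every element of $\sigma_I(\dom{\vr w})$ contains at least one atom from $S$, so $\restr{\sigma_I(\vr w)}{\otu A k} = \zerovec$.

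Finally, for the surviving term $I = \emptyset$, the map $\sigma_\emptyset$ is the identity on $A$, and the extension is irrelevant as noted above, so $\sigma_\emptyset(\vr w) = \vr w$ on $\otu A k$. Combined with the sign $(-1)^0 = 1$, this gives $\restr{\big(\gcog{\vr w}\sigma\big)}{\otu A k} = \vr w$, as required. I do not foresee a genuine obstacle; the only delicate point to state carefully is that the arbitrary extension of $\sigma_I$ to all of $\A$ does not affect the argument, because every tuple in $\dom{\vr w}$ uses only atoms from $A$.
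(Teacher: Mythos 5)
Your proof is correct and is exactly the argument the paper leaves implicit (the claim is stated without proof): only the $I=\emptyset$ term of the alternating sum survives restriction to $\otu A k$, because for $I\neq\emptyset$ every tuple in $\dom{\vr w}\subseteq \otu A k$ is a permutation of the $k$-set $A$ and hence $\sigma_I$ sends it to a tuple containing an atom of $S$, which is disjoint from $A$. You also correctly handle the one delicate point, namely that the arbitrary extension of $\sigma_I$ to an atom automorphism is immaterial since $\dom{\vr w}$ uses only atoms of $A$.
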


%

\begin{proof}[Proof of Lemma~\ref{lem:decr-dim}]
Consider some locally solvable instance $(V, P, \vr t)$ with $V$ in canonical form~\eqref{eq:canform}.

We start by restricting the set $P$ to a subset $P'\subseteq P$ while preserving solvability.
Let $S\in\utu \A k$ be an arbitrary fixed subset of atoms of size $k$ disjoint from $T = \supp {\vr t}$.
For any $p, q \in\Nat$ and $X\subseteq \otu \A p$
let
$X \magicto \Q^q$ denote the subspace
\[
X \magicto \Q^q \ := \ \setof{\vr w : \otu \A p \tofin \Q^q}{\dom{\vr w} \subseteq X}.
\]
Furthermore, let 
\[
\Aul  p {\neg S} \ = \ \otu {(\A\setminus S)} p
\]
denote the set of non-repeating $p$-tuples containing \emph{no} element of $S$, and 
define the subspace $V_S$ of $V$:
\begin{align} \label{eq:VS}
V_S  =  
({\Aul {k_1} {\neg S}} \magicto \Q^{\ell_1}) 
\ \times \ldots \ \times
({\Aul {k_m} {\neg S}} \magicto \Q^{\ell_m}).
\end{align}
Thus $V_S$ contains only those vectors in $V$ whose support is disjoint from $S$.
Consider an instance $(V, P', \vr t)$ where 
$$
P'   \ := \ P \cap V_S \ = \ \setof{\vr v\in P}{\supp {\vr v} \cap S = \emptyset}.
$$
We observe that any finitary solution 
\[
q_1 \cdot \vr v_1 + \ldots + q_m \cdot \vr v_m \ = \ \vr t
\]
of $(V, P, \vr t)$, where $q_1, \ldots, q_m \in \Q$ and $\vr v_1, \ldots, \vr v_m \in P$, 
may be renamed, using a $T$-atom automorphism $\pi$, to a solution involving only
vectors $\pi(\vr v_1), \ldots, \pi(\vr v_m) \in P$ with support disjoint from $S$. 
We have thus argued that:
\begin{claim} \label{claim:instanceS}
For every $S\subseteqfin\A$ disjoint from $\supp {\vr t}$, the instance
$(V, P, \vr t)$ is solvable if and only if $(V, P\cap V_S, \vr t)$ is so.
\end{claim}
\noindent
The instance $(V, P', \vr t)$ is forcedly locally solvable, and computable from $(V, P, \vr t)$.

\para{The instance $(\mdf V, \mdf P, \mdf{\vr t})$}
Let $\main V = \otu \A {k_1} \tofs \Q^{\ell_1}$ be the main component of $V$ 
(for succinctness of notation we write $k$, $\ell$ in place of $k_1, \ell_1$).
For any $p, q \in\Nat$, 
let 
\[
\Aul p {S} \ = \ \otu \A p \setminus \Aul p {\neg S}
\]
denote the set of non-repeating $p$-tuples containing \emph{at least} one element of $S$.
We define $\mdf V$ as 
the subspace of $V$ where the domain in the main (first) component is included in $\Aul {k_1} S = \Aul k S$,
and in all other components in $\Aul {k_i} {\neg S}$, for $i>1$:
\begin{align} \label{eq:mdfV}
\begin{aligned}
\mdf V \ = \  ({\Aul {k} S} \magicto \Q^{\ell})\  \times \ 
({\Aul {k_2} {\neg S}} \magicto \Q^{\ell_2})  & \\
 \ \times \ \ldots \ \times \ 
({\Aul {k_m} {\neg S}} \magicto \Q^{\ell_m})&. 
\end{aligned}
\end{align}
%
Formally speaking, the space $\mdf V$
%
%
is not in canonical form and it is not even clear how its atom dimension would be defined.
The canonical form may be easily recovered by "eliminating" atoms from $S$.
This is tackled formally below.

We now proceed to defining $\mdf P$ 
and $\mdf{\vr t}$.
Note that for every vector $\vr v \in P'$, the domain of its main component $\main{\vr v}$ is included in $\Aul {k} {\neg S}$.
Our aim is to replace every vector $\vr v \in P'$ by a finite set of vectors $\mdf {\vr v}$ whose domain,
after projecting to the main component, is disjoint from $\Aul k {\neg S}$.
Likewise we aim at replacing $\vr t$ by a vector $\mdf{\vr t}$, while preserving solvability.

In the sequel we fix an arbitrary total order on $S$.
Let $\Ord$ denote the set of all total orders $\prec$ on $\A\setminus S$.
Given an order $\prec$ in $\Ord$,
for every $k$-set $A\subseteq \A\setminus S$ the restriction of $\prec$ to $A$ 
uniquely induces an (order preserving) bijection $\sigma^\prec_A : A \to S$.
For a finitary vector $\vr w \in \main V = \otu \A k \tofs \Q^\ell$ we define a finitary vector $\Deltaw {\vr w}\in\main V$ as follows:
%
%
%
\begin{align} \label{eq:delta1}
\Deltaw {\vr w} \ = \ \sum_{A\subseteq \A \setminus S, \size A = k} \gcog {\restr{\vr w} {\otu A k}} {\sigma^\prec_A}.
\end{align}
%
The sum is infinite but well-defined for finitary vectors $\vr w\in \mdf V$, 
as only finitely many cogs $\gcog {\restr{\vr w} {\otu A k}} {\sigma^\prec_A}$ are non-zero, namely 
only when $\otu A k \cap \dom{\vr w} \neq \emptyset$.
For every $\prec \; \in \Ord$, 
the function $\vr w \mapsto \Deltaw {\vr w}$ is a linear mapping (from $\main V$ to $\main V$), and in consequence
so is the function $\vr v\mapsto \vr v - \Deltaw{\main{\vr v}}$:
\begin{claim} \label{claim:linmap}
For every $\prec \; \in \Ord$,
the function $\vr v \mapsto \vr v - \Deltaw{\main{\vr v}}$ is a linear mapping from $V$ to $\mdf V$.
\end{claim}
Using Claim~\ref{claim:cogw} we observe that
$\Deltaw \vr w(a) = \vr w(a)$ for every $a\in \Aul k {\neg S}$.
We define
\begin{align} \label{eq:delta2}
\mdf {\vr v} = \setof{\vr v - \Deltaw {\main{\vr v}}}{\prec \; \in \Ord}
\end{align}
and derive, using the above observation:
\begin{claim}
For every $\vr v \in P'$ we have $\mdf {\vr v} \subseteq \mdf V$.
\end{claim}
Since all vectors $\vr v \in P'$ are finitary,
the set $\mdf {\vr v}$ is finite for every $\vr v \in P'$, even if $\prec$ ranges in~\eqref{eq:delta2} over
all uncountably many total orders $\prec \; \in \Ord$.

We define  $\mdf P := \bigcup_{\vr v \in P'} \mdf{\vr v}$ and derive $\mdf P \subseteq \mdf V$ by the last claim.
We also define $\mdf{\vr t} = \vr t - \Deltawp 0 {\main{\vr t}}$ for some fixed arbitrarily chosen total order $\prec_0 \; \in \Ord$.
We observe that the mapping $\vr v \mapsto \mdf {\vr v}$ is supported by $S$, 
since the set $\Ord$ of total orders is supported by $S$.
In consequence, $\mdf P$ is supported by $\supp {P'} = S\cup \supp P$.
As an orbit-finite union of orbit-finite sets is always orbit-finite~\cite[Exercise~62, Sect.~3]{atombook}, 
so is also an orbit-finite union of finite sets, and we have:
\begin{claim}  \label{claim:Po-f} 
$\mdf P$ is orbit-finite.  
\end{claim}
Refering to Remark~\ref{rem:effect} we may state:
\begin{claim} \label{claim:comput}
$(\mdf V, \mdf P, \mdf {\vr t})$ is computable from $(V, P, \vr t)$.
\end{claim}


%

\para{Correctness}
Before proving correctness, 
we need to state and prove two key technical facts: cogs
appearing in~\eqref{eq:delta1} are spanned by vectors from $P'$,
and so is also the vector $\Deltawp 0 \main{\vr t}$.
Our notation below relies on the implicit embedding of $\main V$ into $V = \main V \times V'$,
cf.~\eqref{eq:plus}, which allows us to consider
every vector $\vr w\in \main V$, in particular every cog, as a vector in $V$.
\begin{restatable}{claim}{claimCogLin}
\label{claim:coglin}
For every $\prec \; \in \Ord$, vector $\vr w \in \main {(P')}$ 
and a $k$-set $A \subseteq \A\setminus S$, 
\[ 
\gcog {\restr{\vr w} {\otu A k}} {\sigma^\prec_A} \ \in \ \Span {} {P'}.
\]
\end{restatable}
\begin{claimproof}
Let $\vr v\in P'$ be any vector such that $\main{\vr v} =  \vr w$.
Thus $\supp{\vr v} \cap S = \emptyset$.
For every $I\subseteq A$, we extend  $(\sigma^\prec_A)_I: A\to A\cup S$ to an 
atom automorphism $\sigma_I\in\Aut{}$ 
that acts as identity on $\supp{\vr v} \setminus A$.
We are going to show that $\gcog {\restr{\vr w} {\otu A k}} {\sigma_A}$ is equal to the following linear combination of vectors from $P'$
(cf.~the definition~\eqref{eq:cog} of cogs):
\begin{align} \label{eq:tbs}
\gcog {\restr{\vr w} {\otu A k}} {\sigma^\prec_A} \ = \ \sum_{I\subseteq A}\, (-1)^{|I|} \cdot \sigma_I (\vr v).
\end{align}
Recalling the implicit embedding of $\main V$ and $V'$ into $V = \main V \times V'$,
we present $\vr v$ as the sum $\vr v = \vr w + \vr v'$
(recall~\eqref{eq:plus}), where  $\vr v'$ is  the projection to all non-main components.
Furthermore, we decompose $\vr w$ into $\vr w = \restr{\vr w} {\otu A k} + \vr w'$.
Thus the right-hand side in~\eqref{eq:tbs}  decomposes into three summands:
\begin{align*}
&\sum_{I\subseteq A}\, (-1)^{|I|} \cdot \sigma_I (\vr v') \quad + \quad
\sum_{I\subseteq A}\, (-1)^{|I|} \cdot \sigma_I (\vr w') \quad + \\
&\sum_{I\subseteq A}\, (-1)^{|I|} \cdot \sigma_I (\restr{\vr w} {\otu A k}).
\end{align*}
The last one is equal to the left-hand side in~\eqref{eq:tbs} and hence it is sufficient to show that the first two summands are zero vectors.
Denote the first two summands as $\vr s_1$ and $\vr s_2$, respectively.
Recall that, given a tuple of atoms $b$ in the domain of $\vr s_1$ or $\vr s_2$, respectively, we have
\begin{align} \label{eq:s1s2}
\begin{aligned}
\vr s_1(b)\  = \ \ & \sum_{I\subseteq A}\, (-1)^{|I|} \cdot \vr v'(\sigma_I^{-1}(b)) \\
\vr s_2(b) \ = \ \ & \sum_{I\subseteq A}\, (-1)^{|I|} \cdot \vr w'(\sigma_I^{-1}(b)).
\end{aligned}
\end{align}
In each of the two summands, every tuple $b=(b_1, \ldots, b_{k'})$ in the domain contains less than $k$ elements of $A\cup S$:
\begin{align} \label{eq:<k}
\size{\set{b_1, \ldots, b_{k'}} \cap (A\cup S)} < k.
\end{align}
In case of $\vr s_1$ the reason is that the domain of every non-main component contains tuples $b \in \A^{k'}$ of atoms
of length $k' < k = \size A$.
In case of $\vr s_2$, while $b\in\otu \A k$, the reason is twofold: first, $\sigma_I^{-1}(b)\notin  \otu A k$
which implies $\size{\set{b_1, \ldots, b_{k'}} \cap A} < k$; second,
$S\cap \supp{\vr w} = \emptyset$ which implies $\size{\set{b_1, \ldots, b_{k'}} \cap S} = 0$.
Due to the property~\eqref{eq:<k}, 
for every tuple $b$ in the domain of a respective vector $\vr s_1$ or $\vr s_2$,
when $I$ ranges over all subsets of $A$, each tuple $\sigma_I^{-1}(b)$ appears as many times
for $I$ of odd size as for $I$ of even size. In consequence all these appearances cancel out and,
whatever the vectors $\vr v'$ and $\vr w'$ and tuple $b$ are, the right-hand sides in the two equalities~\eqref{eq:s1s2} are necessarily zero vectors.
This completes the proof of Claim~\ref{claim:coglin}.
%
%
%
\end{claimproof}

Using Claim~\ref{claim:coglin} one further shows:
\begin{restatable}{claim}{claimDeltat}
\label{claim:Deltat}
$\Deltawp 0 {\main{\vr t}} \in \Span {} {P'}$.
\end{restatable}
%

It remains to show:
\begin{claim} \label{claim:iff}
$\vr t \in \Span {} {P'} \text{ if and only if } \vr {\mdf t} \in \Span {} {\mdf P}.
$
\end{claim}
\begin{claimproof}
The `only if' direction is immediate due to Claim~\ref{claim:linmap}: if $\vr t \in \Span {} {P'}$, i.e., for some
$\ell\in\Nat$ and $q_1, \ldots, q_\ell$ and $\vr v_1, \ldots, \vr v_\ell \in P'$ we have:
\[
\vr t = q_1 \cdot \vr v_1 + \ldots + q_\ell \cdot \vr v_\ell
\]
then by Claim~\ref{claim:linmap} applied to the total order $\prec_0 \; \in \Ord$ we also have:
\begin{align} \label{eq:lincomb}
\mdf{\vr t} = q_1 \cdot (\vr v_1 - \Deltawp 0 {\main{\vr v}}_1) + \ldots + q_\ell \cdot (\vr v_\ell - \Deltawp 0 {\main{\vr v}}_\ell).
\end{align}
Therefore $\mdf{\vr t} \in \Span {}{\mdf P}$.

For `if' direction we assume $\mdf{\vr t} \in \Span {} {\mdf P}$, i.e.,
for some
$\ell\in\Nat$ and $q_1, \ldots, q_\ell$, $\vr v_1, \ldots, \vr v_\ell \in P'$, and $\prec_1, \ldots, \prec_\ell\; \in \Ord$, we have:
\[
\vr t - \Deltawp 0 {\main{\vr t}} = q_1 \cdot (\vr v_1 - \Deltawp 1 {\main{\vr v}}_1) + \ldots + 
q_\ell \cdot (\vr v_\ell - \Deltawp \ell {\main{\vr v}}_\ell).
\]
By Claim~\ref{claim:coglin} we know that $\Deltaw {\vr w} \in \Span{} {P'}$ for every $\vr w \in \main P'$ and $\prec \; \in \Ord$
(since the sum in~\eqref{eq:delta1} is essentially finite), and hence
the right-hand side is in $\Span{} {P'}$.
By Claim~\ref{claim:Deltat} we get $\vr t \in \Span{} {P'}$, as required.
\end{claimproof}

\para{Canonical form}
$(\mdf V, \mdf P, \mdf{\vr t})$ is not a formally correct instance as the space $\mdf V$~\eqref{eq:mdfV} 
is not in canonical form, and furthermore neither $\mdf V$ nor $\mdf P$ are equivariant.
$(\mdf V, \mdf P, \mdf{\vr t})$ may be however easily  transformed further into a formally correct instance as follows. 
Consider the partition of $\Aul k S$ into $S$-orbits:
\[
\Aul k S = O_1 \uplus \ldots \uplus O_{m'},
\]
for $m'\in\Nat$,
an observe that the main component $\Aul k S \magicto \Q^\ell$ of $\mdf V$ is isomorphic to
\begin{align} \label{eq:mainorbits}
(O_1 \magicto \Q^\ell) \times \ldots \times (O_{m'} \magicto \Q^\ell).
\end{align}
In case of all other components $i=2,\ldots, m$, the set $\Aul {k_i} {\neg S}$ is a single $S$-orbit.
For $i = 1, \ldots, m'$, let $r_i > 0$ denote the $S$-atom dimension of $O_i$.
As each $O_i$ is related by an $S$-supported isomorphism to $\Aul {r_1} {\neg S}$, 
the vector space~\eqref{eq:mainorbits} is related by an $S$-supported isomorphism to
\[
(\Aul {r_1} {\neg S} \magicto \Q^\ell) \ \times \ \ldots \ \times \ 
(\Aul {r_{m'}} {\neg S} \magicto \Q^\ell).
\]
We group together orbits with the same atom dimension $r_i$:
there are some pairwise different $r'_1, \ldots, r'_p\in\Nat$, and some positive  $\ell'_1, \ldots, \ell'_p\in\Nat$, such that
the main component of $\mdf V$ is 
related by an $S$-supported isomorphism to the subspace 
\begin{align} \label{eq:newcansub}
({\Aul {r'_1} {\neg S}} \magicto \Q^{\ell'_1}) \ 
 \times \ \ldots \ \times \ 
({\Aul {r'_p} {\neg S}} \magicto \Q^{\ell'_p})&
\end{align}
of the vector space in canonical form
\begin{align} \label{eq:newcan}
({\otu \A {r'_1}} \tofin \Q^{\ell'_1})  
\times  \ldots  \times  
({\otu \A {r'_p}} \tofin \Q^{\ell'_p}).
\end{align}
Relying on~\eqref{eq:mdfV} we deduce that the whole vector space $\mdf V$ is also related by an $S$-supported isomorphism to
the subspace~\eqref{eq:newcansub} of some vector space of similar form~\eqref{eq:newcan}.
Denote the latter vector space by $\mdf V'$, and observe that the subspace~\eqref{eq:newcansub} is 
exactly $\mdf V'_S$ (as defined in~\eqref{eq:VS}).

Applying the above $S$-supported isomorphism also to $\mdf P$ and $\mdf {\vr t}$,
 we get an instance
$(\mdf V', \mdf P', \mdf{\vr t}')$ equisolvable with $(V, P, \vr t)$.
Finally, we replace $\mdf P'$ by its equivariant closure
\[
\mdf P'' \ = \ \setof{\pi(\vr v)}{\pi \in \Aut{}, \ \vr v \in \mdf P'}
\]
(therefore $\mdf P' = \mdf P'' \cap \, \mdf V'_S$)
and deduce using Claim~\ref{claim:instanceS} that
the so obtained instance $(\mdf V', \mdf P'', \mdf{\vr t}')$ 
is equisolvable with $(\mdf V', \mdf P', \mdf{\vr t}')$.
%
%
The transformation from  $(\mdf V, \mdf P, \mdf{\vr t})$ to $(\mdf V', \mdf P'', \mdf{\vr t}')$ is effective
(cf.~Remark~\ref{rem:effect}).
Finally, each $r'_i$ in~\eqref{eq:newcan} is strictly smaller than $k$ and hence the atom dimension of $\mdf V'$ 
is smaller than that of $V$.
We have thus shown:
\begin{claim}  \label{claim:inst-comp}
$(\mdf V', \mdf P'', \mdf {\vr t}')$ in canonical form is computable from $(\mdf V, \mdf P, \mdf{\vr t})$,  
it is equisolvable with $(\mdf V, \mdf P, \mdf{\vr t})$,
and $\mdf V'$ has smaller atom dimension than $V$.
\end{claim}

%

Claims~\ref{claim:comput}, \ref{claim:iff} and~\ref{claim:inst-comp}  conclude the proof of Lemma~\ref{lem:decr-dim}. 
\end{proof}


\section{Solvability reduces to finitary solvability} \label{sec:solv2fin-solv}
In this section we reduce solvability to finitary solvability:
\begin{theorem}\label{thm:gen red}
\solvname$(\Q)$ reduces to \finsolvname$(\Q)$.  
\end{theorem}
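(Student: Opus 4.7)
The plan is to use the Orbit-Finite Basis Theorem (Theorem~\ref{thm:basis}) to recode an arbitrary finitely-supported solution $\vr x \in \GLin C$ as a \emph{finitary} vector in its basis coordinates, thereby turning an instance of \solvname$(\Q)$ into an instance of \finsolvname$(\Q)$ over a new orbit-finite column set.

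Concretely, given input $(\vr A, \vr t)$ with $\vr A \in \GLin{B\times C}$ and $\vr t \in \GLin B$, I would first compute an orbit-finite basis $\basisof C$ of $\GLin C$ using the construction of Section~\ref{sec:bases} (effective by Remark~\ref{rem:effect}). Every $\vr x \in \GLin C$ then corresponds uniquely to a finitary $\vr y \in \Lin{\basisof C}$ via $\vr x = \sum_{\vr b \in \basisof C} \vr y(\vr b) \cdot \vr b$. Next, I would build a new orbit-finite matrix $\vr A' : B \times \basisof C \tofs \Q$ with entries $\vr A'(b, \vr b) = \innerprod{\vr A(b,\_)}{\vr b}$, keeping the same target $\vr t$. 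By linearity, whenever the relevant inner products are well-defined, we have $(\vr A \vr x)(b) = \sum_{\vr b} \vr y(\vr b)\, \vr A'(b,\vr b) = (\vr A' \vr y)(b)$ for every $b \in B$, so $(\vr A, \vr t)$ is solvable in $\GLin C$ iff $(\vr A', \vr t)$ has a finitary solution in $\Lin{\basisof C}$, which is exactly an instance of \finsolvname$(\Q)$.

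The main obstacle is that $\innerprod{\vr A(b,\_)}{\vr b}$ need not be well-defined in general: writing $\vr b = \constvr 1 O$ for a tight orbit $O \subseteq C$, well-definedness requires $O \cap \dom{\vr A(b,\_)}$ to be finite, which can fail when $O$ is infinite. I would address this by refining $\basisof C$: enlarge the parameter set $S$ used to construct tight orbits so that it contains $\supp {\vr A}$ (and any fixed support atoms of $\vr t$), so that every tight orbit in the refined basis either has finite intersection with each $\dom{\vr A(b,\_)}$ or is entirely disjoint from it. Because $B$ is orbit-finite, only finitely many orbit-types of rows arise, and Claim~\ref{claim:tight} guarantees that the refined family remains orbit-finite; the spanning and independence arguments of Section~\ref{sec:bases} (in particular Lemma~\ref{lem:lincomb} and Claim~\ref{claim:inclcup}) then show that the refinement is still a basis. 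One must additionally verify that cancellation phenomena---where $\vr A \vr x$ is well-defined even though some $\vr A \vr b_i$ in the expansion is not---cannot occur in the refined basis, so that passage through coordinates preserves well-definedness in both directions. This is the crux of the technical work; once in place, the reduction is immediate and, combined with Theorem~\ref{thm:fin-solv}, yields Theorem~\ref{thm:solv}.
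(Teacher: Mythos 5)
Your overall strategy---recode a finitely-supported solution $\vr x\in\GLin C$ as a finitary vector of coordinates over the orbit-finite basis $\basisof C$, and replace the columns of $\vr A$ by the products $\innerprod{\vr A}{\vr b}$ for basis vectors $\vr b$---is exactly the paper's. However, your treatment of the well-definedness obstacle has a genuine gap, in two respects. First, the proposed remedy (refine the basis by enlarging the support parameter $S$ so that every tight orbit has finite intersection with each row domain $\dom{\vr A(b,\_)}$ or is disjoint from it) cannot work: take $C=\A$, $B$ a single equivariant point $b$, and $\vr A(b,\a)=1$ for all $\a$; then $\dom{\vr A(b,\_)}=\A$, and every infinite orbit of $C$, however refined, meets it infinitely, so $\innerprod{\vr A}{\constvr 1 O}$ stays ill-defined for every infinite tight orbit $O$. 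No refinement makes all products well-defined, and a matrix $\vr A'$ indexed by all of $\basisof C$ is simply not well-formed. The paper instead keeps the basis as is (after reducing to the case where $C$ is a single $T$-orbit, $T=\supp{\vr A}$) and takes as the new column set only $\spread C=\setof{\vr b\in\basisof C}{\innerprod{\vr A}{\vr b}\text{ well-defined}}$.

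Second, with that exclusion in place the whole burden shifts to the statement you yourself flag as ``the crux'' but do not prove: if $\innerprod{\vr A}{\vr x}$ is well-defined, then every basis vector $\constvr 1 {O'}$ occurring in the basis representation of $\vr x$ also has $\innerprod{\vr A}{\constvr 1 {O'}}$ well-defined, so the representation lives entirely inside $\spread C$ and no cancellation between ill-defined terms is needed. This is precisely the paper's Lemma~\ref{lem:wd}, and it is the real content of the reduction: its proof goes through the characterisation of well-definedness as \emph{exactness} of the pair $(\vr A,\vr x)$ (Lemma~\ref{lem:exact}), namely $\supp c\subseteq\supp b\cup\supp{\vr A}\cup\supp{\vr x}$ whenever $\vr A(b,c)\neq 0$ and $\vr x(c)\neq 0$, followed by an induction that tracks the tight orbits $O'$ produced in the proof of Lemma~\ref{lem:lincomb} and shows each pair $(\vr A,\constvr 1{O'})$ is exact. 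Without this argument, and with the refinement idea replaced by exclusion, your reduction does not go through as written.
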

%

%
Let $\vr A \in \GLin{B \times C}$ and $\vr t\in\GLin B$ be the input system.
In terms of spans, the solvability problem amounts to deciding if $\vr t \in \GSpan B {\vr A}$.
We will prove the result by effectively constructing a matrix $\spread {\vr A}$ with the same row-indexing set
$B$ as $\vr A$, such that $\GSpan B {\vr A} = \Span {} {\spread {\vr A}}$.

\para{Well-definedness and exactness}
Let 
$\vr x \in \GLin C$ a vector.
We start by a characterisation of vectors $\vr x\in\GLin C$ for which the product 
$\vr y = \innerprod{\vr A}{\vr x}$ is well-defined.
Recall that $\vr y(b)$ is well-defined if and only if
there are only finitely many $c\in C$ such that 
$\vr A(b,c) \neq 0$ and $\vr x(c) \neq 0$.
Let $S = \supp {\vr A}\cup \supp{\vr x}$; in other words, $S$ is the support of the pair $(\vr A, \vr x)$.
We say that the pair $(\vr A, \vr x)$ is \emph{exact} if for every $b\in B$ and $c\in C$ 
such that $\vr A(b,c) \neq 0$ and $\vr x(c) \neq 0$ it holds
\begin{align} \label{eq:exact}
\supp c \subseteq \supp b \cup S.
\end{align}
%
\vspace{-3mm}
\begin{lemma} \label{lem:exact}
$\innerprod {\vr A} {\vr x}$ is well-defined if and only if $(\vr A, \vr x)$ is exact. 
\end{lemma}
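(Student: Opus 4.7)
My plan is to prove the two implications separately, with the forward ($\Leftarrow$) direction being a direct counting argument and the reverse ($\Rightarrow$) direction proceeding by contrapositive via renaming automorphisms.

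For the easier direction, suppose $(\vr A, \vr x)$ is exact and fix some $b \in B$. Exactness tells us that every $c \in C$ contributing a non-zero summand to $\innerprod{\vr A(b, \_)}{\vr x}$ satisfies $\supp c \subseteq \supp b \cup S$, a finite subset of atoms. I would then invoke the general fact that in an orbit-finite set, only finitely many elements have support contained in a fixed finite set of atoms (this follows from Claim~\ref{claim:orbitsup} together with Lemma~\ref{lem:repr}, since each equivariant orbit of $C$ contributes at most finitely many such elements, $C$ has finitely many orbits over $\supp C$, and refining orbits over $\supp b \cup S$ keeps their number finite). Hence the sum $\innerprod{\vr A(b, \_)}{\vr x}$ has finitely many non-zero summands, so it is well-defined for every $b$.

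For the contrapositive of the other direction, assume $(\vr A, \vr x)$ is not exact: there exist $b \in B$, $c \in C$ and an atom
\[
\a \ \in \ \supp c \setminus (\supp b \cup S)
\]
with $\vr A(b, c) \neq 0$ and $\vr x(c) \neq 0$. I would pick an infinite sequence of fresh atoms $\a_1, \a_2, \ldots \in \A \setminus (\supp b \cup S \cup \supp c)$ and consider the transpositions $\pi_i = (\a\ \a_i)$. Each $\pi_i$ fixes $\supp b \cup S$ pointwise, hence fixes $b$ and, since $S$ supports both $\vr A$ and $\vr x$, we obtain $\vr A(b, \pi_i(c)) = \pi_i(\vr A(b,c)) \neq 0$ and $\vr x(\pi_i(c)) = \pi_i(\vr x(c)) \neq 0$ (using Claim~\ref{claim:pisup}).

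It remains to check that the $\pi_i(c)$ are pairwise distinct. By Claim~\ref{claim:pisup},
\[
\supp{\pi_i(c)} \ = \ \pi_i(\supp c) \ = \ (\supp c \setminus \{\a\}) \cup \{\a_i\},
\]
and the right-hand sides are pairwise distinct since the $\a_i$ are distinct and none lies in $\supp c \setminus \{\a\}$. Thus infinitely many $c' = \pi_i(c)$ contribute non-zero summands to $\innerprod{\vr A(b, \_)}{\vr x}$, so this sum is not well-defined, completing the contrapositive. The main subtlety I expect is the first direction — making sure one correctly appeals to (or proves) the fact that an orbit-finite set contains only finitely many elements supported by a given finite atom set; everything else is straightforward manipulation with supports and automorphisms.
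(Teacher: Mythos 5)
Your proof is correct and takes essentially the same approach as the paper's: the \emph{if} direction is the same counting argument via Claim~\ref{claim:orbitsup} (only finitely many elements of an orbit-finite set have support inside the finite set $\supp b \cup S$), and the \emph{only if} direction uses the same infinite family of $(\supp b \cup S)$-fixing automorphisms sending the witness atom $\a$ to fresh atoms, yielding infinitely many non-zero summands.
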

\begin{proof}
%
For the if direction, suppose $(\vr A, \vr x)$ is exact, and consider an arbitrary fixed $b\in B$.
Let $T = \supp b \cup S$.
By~\eqref{eq:exact} the support of every $c$ satisfying $\vr A(b,c) \neq 0$ and $\vr x(c) \neq 0$
is included in $T$. 
By Claim~\ref{claim:orbitsup} in Section~\ref{sec:bases},
for every fixed set $T'\subseteq T$, 
every orbit $O\subseteq C$ contains at most $\size{T'}!$ elements $c\in O$ such that $\supp c = T'$,
and since $C$ is orbit-finite,
there are only finitely many $c\in C$ satisfying $\vr A(b,c) \neq 0$ and $\vr x(c) \neq 0$.
The product $\innerprod {\vr A} {\vr x}$ is thus well-defined, as required.

For the opposite direction, suppose $(\vr A, \vr x)$ is not exact, i.e., for some $b\in B$ and $c\in C$ we have:
\[
\vr A(b,c) \neq 0, \qquad \vr x(c) \neq 0, \qquad \supp c \not\subseteq \supp b \cup S.
\]
According to the latter condition, 
some atom $\a\in \A$ satisfies $\a\in\supp c$ and $\a\not\in T = \supp b \cup S$.
Note that every $T$-atom automorphism preserves $b$ and $\vr A$, and hence also preserves the row 
vector $\vr A(b, \_)$.
Consider an infinite family of $T$-automorphisms $\pi$ that map $\a$ to different atoms $\pi(\a) \notin T$.
For every such $\pi$ we have $\pi(c) \neq c$, but $\vr A(b, \pi(c)) = \vr A(b, c) \neq 0$.
Furthermore, every such $\pi$ preserves $\vr x$, and hence we have
$\vr x(\pi(c)) = \vr x(c) \neq 0$.
In consequence, there are infinitely many $c\in C$ such that
$\vr A(b, c) \neq 0$ and $\vr x(c) \neq 0$, i.e., the product $\innerprod {\vr A} {\vr x}$ is not well-defined on $b$.
This completes the proof.
\end{proof}

The following lemma is a crucial tool in our proof: 
\begin{lemma} \label{lem:wd}
Let $B$ be an orbit-finite set, $T\subseteqfin\A$, and $C$ a $T$-orbit.
Let $\vr A \in \GLin{B\times C}$
be a $T$-supported matrix and $\vr v \in \GLin C$ a vector.
If $\innerprod{\vr A}{\vr v}$ is well-defined and $\constvr 1 {O'} \in\basisof C$ appears in the basis representation of $\vr v$
then $\innerprod{\vr A}{\constvr 1 {O'}}$ is well-defined too.
\end{lemma}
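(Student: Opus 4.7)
The plan is to translate the well-definedness statements into exactness via Lemma~\ref{lem:exact} and then exploit the structural information about basis vectors given by Lemma~\ref{lem:lincomb}. Since $C$ is a $T$-orbit, Claim~\ref{claim:supvect} gives $T\subseteq \supp{\vr v}$; combined with $\supp{\vr A}\subseteq T$, this yields $S:=\supp{(\vr A,\vr v)}=\supp{\vr v}$. By Lemma~\ref{lem:lincomb}, every tight orbit $O_i$ in the basis representation of $\vr v$ satisfies $T\subseteq \supp{O_i}\subseteq S$, and so $\supp{(\vr A,\constvr 1 {O_i})}=\supp{O_i}$. The statement thus reduces to showing, for every $b$ and every $c\in O'$ with $\vr A(b,c)\neq 0$, the bound $\supp c\subseteq \supp b\cup \supp{O'}$.

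I plan to argue by contradiction. Suppose some $b$ and $c\in O'$ witness a failure, i.e.\ $\vr A(b,c)\neq 0$ and $\alpha\in \supp c\setminus(\supp b\cup \supp{O'})$ for some atom $\alpha$. Consider the infinite family of atom automorphisms $\pi\in \Aut{\supp b\cup S}$ sending $\alpha$ to fresh atoms outside $\supp b\cup S\cup \supp c$. Each such $\pi$ fixes $\vr A$, $b$, and $\vr v$, and also fixes $\supp{O'}$ pointwise, so $\pi(c)\in O'$, $\vr A(b,\pi(c))=\vr A(b,c)\neq 0$, and $\vr v(\pi(c))=\vr v(c)$; by Claim~\ref{claim:pisup} the images $\pi(c)$ are pairwise distinct. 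When $\alpha\notin S$ and $\vr v(c)\neq 0$ this immediately contradicts exactness of $(\vr A,\vr v)$.

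The remaining sub-cases --- $\alpha\in S\setminus \supp{O'}$ or $\vr v(c)=0$ --- I plan to handle by induction along the peeling procedure from the proof of Lemma~\ref{lem:lincomb}, processing the extracted tight orbits $O^{(1)}, O^{(2)},\ldots$ in order. The inductive hypothesis yields well-definedness of $\innerprod{\vr A}{\vr v^{(k)}}$ with $\vr v^{(k)} = \vr v - \sum_{l<k} q^{(l)}\constvr 1 {O^{(l)}}$, and hence exactness of $(\vr A, \vr v^{(k)})$. The leading tight orbit $O^{(k)}$ is the $S'$-closure of an $S$-orbit $O\subseteq O^{(k)}$ of maximum $S$-atom dimension on which $\vr v^{(k)}$ is nonzero; points $c\in O$ are handled directly by the argument of the previous paragraph. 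For $c$ in a non-leading $S$-orbit $M\subseteq O^{(k)}$, Claim~\ref{claim:ind} gives $\dimm S M<\dimm S O$, so $\size{\supp c\cap S}>\size{\supp{O^{(k)}}}$, which should provide the additional rigidity needed to pin $\alpha$ into $\supp b\cup \supp{O^{(k)}}$. The main obstacle I anticipate is the careful bookkeeping in this sub-case when $\vr v^{(k)}(M)=0$: one must arrange transports of $c$ that remain inside $O^{(k)}\cap \dom{\vr A(b,\_)}$ while being comparable to some element of nonzero $\vr v^{(k)}$-value, invoking Claims~\ref{claim:orbitsup} and~\ref{claim:pisup} together with the uniqueness of the basis representation.
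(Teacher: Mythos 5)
Your overall route is the paper's: reduce well-definedness to exactness via Lemma~\ref{lem:exact} and induct along the peeling procedure of Lemma~\ref{lem:lincomb}, handling one extracted tight orbit $O'$ at a time. The easy sub-case you dispose of correctly. But the lemma's entire content sits in the sub-cases you defer --- $c$ lying in a non-leading $S$-orbit $M\subseteq O'$ (where $\vr v$ may well vanish, so no contradiction with exactness of $(\vr A,\vr v)$ is available at $c$ itself), and the offending atom lying in $S\setminus S'$ (where your family of $(\supp b\cup S)$-automorphisms fixes $\alpha$ and produces nothing) --- and the device you propose for them does not close the gap. The obstruction is structural: you insist on keeping $b$ fixed and moving $c$ within $\dom{\vr A(b,\_)}$, but any automorphism carrying $c$ from $M$ into the leading orbit $O$ must move atoms of $\supp c\cap S$ outside $S'$, hence cannot fix $\vr v$ or (in general) $b$; conversely, automorphisms that fix $b$ and $S$ cannot reach $O$. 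The cardinality observation from Claim~\ref{claim:ind} ($\size{\supp c\cap S}>\size{S'}$ for $c\in M$) is true but does not by itself ``pin $\alpha$'' anywhere.

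The missing idea is to transport the \emph{pair} $(b,c)$ rather than $c$ alone. Pick any $S'$-atom automorphism $\pi$ with $\pi(c)\in O$ (the leading $S$-orbit, where $\vr v(O)\neq 0$). Since $\vr A$ is $T$-supported and $T\subseteq S'$, it is $S'$-supported, so $\vr A(\pi(b),\pi(c))=\vr A(b,c)\neq 0$ --- you pay for moving $b$ by the $S'$-invariance of the matrix, not by fixing $b$. Exactness of $(\vr A,\vr v)$ now applies at the transported pair and gives $\supp{\pi(c)}\subseteq\supp{\pi(b)}\cup S$; and because $\pi(c)\in O$, the defining equation $S'=\supp{\pi(c)}\cap S$ shows no atom of $S\setminus S'$ occurs in $\supp{\pi(c)}$, so the inclusion strengthens to $\supp{\pi(c)}\subseteq\supp{\pi(b)}\cup S'$. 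Applying $\pi^{-1}$ yields $\supp c\subseteq\supp b\cup S'$ for \emph{every} $c\in O'$ with $\vr A(b,c)\neq 0$, i.e., exactness of $(\vr A,\constvr 1 {O'})$, with both of your problematic sub-cases handled uniformly. (Minor point: since $C$ is a $T$-orbit but not necessarily $T=\supp C$, you should take $S=\supp{\vr v}\cup T$ rather than invoking Claim~\ref{claim:supvect} to get $T\subseteq\supp{\vr v}$.)
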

\begin{proof}
By Claim~\ref{claim:tightiso} in Section~\ref{sec:bases} assume w.l.o.g. that $C$ is a tight $T$-orbit.
Let $\vr v\in \GLin C$, and let $S = \supp {\vr v} \cup T$.
We follow the definition of the basis representation of $S$-supported vector $\vr v$ 
by structural induction with respect to the transitive closure of $\prec$,
as in the proof of Lemma~\ref{lem:lincomb}.
If $\odom {\vr v}$ is empty
then $\vr v$ is the zero vector and the claim holds vacuously.
Otherwise, suppose the claim holds for all strictly smaller $S$-supported vectors $\vr w$.
As in the proof of Lemma~\ref{lem:lincomb},
take an $S$-orbit $O\in\odom{\vr v}$ of maximal $S$-dimension. 
Let 
\begin{align}\label{eq:crucial}
S' := \supp c \cap S
\end{align}
for some (every) $c\in O$.
Since $T\subseteq S$ and $T\subseteq \supp c$ (as $C$ is tight), we deduce $T\subseteq S'$.
We define the $S'$-orbit $O'$ as $S'$-closure of $O$:
\[
O' \ := \ \setof{\pi(c)}{c\in O, \ \pi \in\Aut {S'}}.
\]
By definition, $S'$ is included in the support of every element of $O'$, therefore the orbit $O'$ is tight,
and hence $\constvr 1 {O'}\in \basisof C$.
According to the proof of Lemma~\ref{lem:lincomb}, the vector $\constvr 1 {O'}$
appears in the basis representation of $\vr v$, together with the vectors appearing in the basis representation of the vector
\begin{align} \label{eq:v'}
\vr w \quad := \quad \vr v \ - \ \vr v(O) \cdot \constvr 1 {O'}.
\end{align}
Note that $\vr w$ is supported by $S$ as both $\vr v$ and $\constvr 1 {O'}$ are so, and
$\vr w(O) = 0$.
By Claim~\ref{claim:ind} we infer that $\vr w \prec \vr v$
and therefore, relying on the induction assumption $\vr w$,
it is sufficient to show that $\innerprod{\vr A}{\constvr 1 {O'}}$ is well-defined.

According to the assumption and Lemma~\ref{lem:exact} we know that
$(\vr A, \vr v)$ is exact. 
Using Lemma~\ref{lem:exact} again, it is sufficient to show that $(\vr A, \constvr 1 {O'})$ is exact too.


Choose an arbitrary element $c\in O'$ and $b\in B$ such that $\vr A (b,c)\neq 0$, and an arbitrary
$S'$-atom automorphism $\pi$ such that $\pi(c)\in O$.
$\vr A$ is $T$-supported so it is also $S'$-supported (since $T\subseteq S'$).
Hence $\vr A (\pi(b), \pi(c))\neq 0$.
As $(\vr A, \vr v)$ is exact and $\vr v(O)\neq 0$, we have:
\begin{align}\label{eq:pisub}
\supp{\pi(c)}\subseteq \supp{\pi(b)}\cup S. 
\end{align}
By definition~\eqref{eq:crucial} of $S'$, as $\pi(c)\in O$, we have
$S'=\supp{\pi(c)}\cap S$, and thus the inclusion~\eqref{eq:pisub} can be strenghtened to
\[
\supp{\pi(c)}\subseteq \supp{\pi(b)}\cup S'.
\]
Application of $\pi^{-1}$ to both sides yields $\supp c \subseteq \supp b\cup S'$. 
As $b$ and $c$ were chosen arbitrarily, we conclude that
$(\vr A, \vr 1_{O'})$ is exact, as required.
%
%
%
%
%
\end{proof}


%
\para{Proof of Theorem~\ref{thm:gen red}}

Consider a system of equations $(\vr A, \vr t)$ where $\vr A\in\GLin{B\times C}$ is a matrix
and $\vr t\in\GLin B$.
Let $T = \supp{\vr A}$. 
Thus $B$ and $C$ are supported by $T$ as well.
%

We are going to construct effectively a matrix $\spread {\vr A}$ with the same row-indexing set $B$
as $\vr A$, which
satisfies $\GSpan B {\vr A} = \Span {} {\spread {\vr A}}$.
We claim that it is enough to consider the special case when $C$ is a single $T$-orbit.
Indeed, split the matrix $\vr A$ into $m$ matrices
\[
\vr A \ = \ \begin{bmatrix} \vr A_1 | \ldots | \vr A_m \end{bmatrix}
\]
each corresponding to one $T$-orbit $C_i \subseteq C$. 
Assuming matrices $\spread{\vr A_i}$ such that
$\GSpan B {\vr A_i} = \Span {} {\spread {\vr A_i}}$ for $i = 1, \ldots, m$, we construct a matrix $\spread{\vr A}$ as
\[
\spread{\vr A} \  =  \ \begin{bmatrix} \spread{\vr A_1} | \ldots | \spread{\vr A_m} \end{bmatrix}
\]
and claim that $\GSpan B {\vr A} = \Span {} {\spread {\vr A}}$ as well. 
Indeed, $\vr v \in \GSpan B {\vr A}$ if and only if ($*$)
$\vr v = \vr v_1 + \ldots + \vr v_m$ where $\vr v_i \in \GSpan B {\vr A_i}$ for $i = 1, \ldots, m$;
replacing $\vr v_i \in \GSpan B {\vr A_i}$ by equivalent $\vr v_i \in \Span B {\spread{\vr A_i}}$
for every $i=1,\ldots, m$, the claim ($*$) is equivalent to
$\vr v \in \Span {} {\spread{\vr A}}$.
We thus proceed under the assumption that $C$ is a single $T$-orbit.
%
Therefore $\vr A$ satisfies the assumptions of Lemma~\ref{lem:wd}. 

As the indexing set of $\spread {\vr A}$ we take those basis vectors $\belt \in \basisof C$ for which
$\innerprod {\vr A} {\belt}$
is well defined: 
\[
\spread C = \setof{\belt \in \basisof C}{\innerprod {\vr A} {\belt} \text{ is well-defined}}.
\]
The new indexing set $\spread C$ is orbit-finite as $\basisof C$ is so, and is $T$-supported
since both $\vr A$ and $\basisof C$ are  $T$-supported.
We define the new matrix $\spread{\vr A} : B \times \spread C \tofs \Q$ as follows
\[
\spread {\vr A}(\_, \belt) \ = \ \innerprod {\vr A} {\belt}.
\]
Note the injection $c\mapsto \constvr 1 c$ of $C$ into $\spread C$, as $\innerprod {\vr A}{\constvr 1 c} = \vr A(\_, c)$ is always 
well-defined.
Therefore $\spread {\vr A}$ extends $\vr A$, as $\spread{\vr A}(\_, \constvr 1 c) = 
\innerprod {\vr A}{\constvr 1 c} = \vr A(\_, c)$.
It is now sufficient to prove:
\begin{claim} \label{lem:SS}
$\GSpan {} {\vr A} = \Span {} {\spread{\vr A}}$.
\end{claim}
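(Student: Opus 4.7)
The plan is to prove the two inclusions separately, using the Orbit-Finite Basis Theorem (Theorem~\ref{thm:basis}) together with Lemma~\ref{lem:wd} as the central tools. The $\supseteq$ direction is essentially bookkeeping via linearity; the $\subseteq$ direction is where the real work happens and where Lemma~\ref{lem:wd} is decisive.

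For $\supseteq$, I take any finitary $\vr w \in \Lin{\spread C}$ and set
\[
\vr v \ = \ \sum_{\belt \in \dom{\vr w}} \vr w(\belt) \cdot \belt \ \in \ \GLin C,
\]
a finite sum of finitely-supported vectors, hence finitely supported. Since each $\belt\in\dom{\vr w}$ lies in $\spread C$, each $\innerprod{\vr A}{\belt}$ is well-defined, and by additivity/scalar homogeneity of the inner product on finite sums, $\innerprod{\vr A}{\vr v}$ is well-defined and equals $\sum_{\belt} \vr w(\belt) \cdot \innerprod{\vr A}{\belt} = \mult{\spread{\vr A}}{\vr w}$. Thus $\mult{\spread{\vr A}}{\vr w} = \mult{\vr A}{\vr v}\in\GSpan{}{\vr A}$.

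For $\subseteq$, take $\vr v\in\GLin C$ with $\mult{\vr A}{\vr v}$ well-defined. Let $S=\supp{\vr v}\cup T$. By Lemma~\ref{lem:lincomb} applied inside $\GLin C$ (using the proviso that $C$ is a single $T$-orbit, which gives the tightness hypothesis via Claim~\ref{claim:tightiso} as in the proof of Lemma~\ref{lem:wd}), $\vr v$ admits a finite representation
\[
\vr v \ = \ q_1 \cdot \belt_1 + \ldots + q_n \cdot \belt_n
\]
with $\belt_i \in \basisof C$ and $q_i\in\Q$. The crucial point is that Lemma~\ref{lem:wd} guarantees $\innerprod{\vr A}{\belt_i}$ is well-defined for each $i$, so every $\belt_i$ actually lies in $\spread C$. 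Define $\vr w\in\Lin{\spread C}$ by $\vr w(\belt_i) = q_i$ (and $0$ elsewhere). Then
\[
\mult{\spread{\vr A}}{\vr w} \ = \ \sum_{i=1}^n q_i\cdot \innerprod{\vr A}{\belt_i} \ = \ \innerprod{\vr A}{\vr v} \ = \ \mult{\vr A}{\vr v},
\]
the middle equality holding because all inner products on either side are well-defined, so linearity applies unambiguously. Hence $\mult{\vr A}{\vr v}\in\Span{}{\spread{\vr A}}$.

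The only genuine obstacle is ensuring that, when we expand $\vr v$ in the basis $\basisof C$, each individual summand's inner product with $\vr A$ is meaningful — a priori it might happen that the ``cancellations'' producing $\vr v$ also produce the well-definedness of $\innerprod{\vr A}{\vr v}$, while a lone basis vector $\belt_i$ pairs badly with $\vr A$. This is exactly the pathology that Lemma~\ref{lem:wd} rules out, via its inductive analysis of the basis decomposition along tight orbits; everything else in the proof is a direct manipulation of finite linear combinations.
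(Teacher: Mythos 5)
Your proof is correct and follows essentially the same route as the paper's: the easy inclusion by linearity of the matrix--vector product over finite combinations, and the hard inclusion by expanding $\vr x$ in the orbit-finite basis $\basisof C$ and invoking Lemma~\ref{lem:wd} to place each basis vector in $\spread C$. Your closing remark correctly identifies the one genuine obstacle (well-definedness of each individual $\innerprod{\vr A}{\belt_i}$), which is exactly the role Lemma~\ref{lem:wd} plays in the paper as well.
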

\begin{claimproof}
W.l.o.g.~we assume that $\vr A$ contains non-zero column vectors only
(otherwise, since $C$ is a single orbit, 
\emph{all} column vectors in $\vr A$ are zero vectors and the claim holds vacuously).
In one direction, consider any vector $\vr v \in \Span {} {\spread{\vr A}}$, i.e.,
\[
\vr v \ = \ q_1 \cdot (\innerprod{\vr A} {\belt_1}) + \ldots + q_n \cdot (\innerprod {\vr A} {\belt_n})
\]
for $q_1, \ldots, q_n\in \Q$ and $\belt_1, \ldots, \belt_n \in \spread {C}$, which immediately
yields the required membership in $\GSpan {} {\vr A}$:
\[
\vr v \ = \ \innerprod {\vr A} (q_1 \cdot \belt_1 + \ldots + q_n \cdot \belt_n) \ \in \ \GSpan {} {\vr A}.
\]
In the opposite direction, let $\vr v = \innerprod {\vr A} {\vr x}$ be well-defined for some $\vr x \in \GLin C$.
We are going to prove that $\vr v \in \Span {}{\spread{\vr A}}$.
Consider the representation of $\vr x$ in the basis $\basisof C$:
\[
\vr x \ = \ q_1 \cdot \belt_1 + \ldots + q_\ell \cdot \belt_\ell.
\]
Due to Lemma~\ref{lem:wd} we know that $\innerprod {\vr A} {\belt_i}$ is well-defined
and hence $\belt_i \in \spread C$
for all $i=1,\ldots, \ell$.
Therefore
\begin{align*}
\vr v \ = \ \ & \innerprod {\vr A} {(q_1 \cdot \belt_1 + \ldots + q_\ell \cdot \belt_\ell)} \ = \ \\
& q_1 \cdot (\innerprod {\vr A}{\belt_1}) + \ldots + q_\ell \cdot (\innerprod {\vr A}{\belt_\ell}) \ = \ \\ 
& q_1 \cdot \spread{\vr A}(\_, \belt_1) + \ldots + q_\ell \cdot \spread{\vr A}(\_, \belt_\ell) \ \in \ \Span {} {\spread{\vr A}},
\end{align*}
as required.
\end{claimproof}

As discussed in Remark~\ref{rem:effect}, the transformation from $\vr A$ to $\spread{\vr A}$ is effective.
This completes the proof of Theorem~\ref{thm:gen red}.
\qed

\para{Complexity} 
We conclude with a rough estimation of complexity with respect to the number of orbits in $B$ and $C$,
and the atom dimension of the input system $(\vr A, \vr t)$ defined as the largest atom dimension
of each of its orbits, plus the size of its support.

The blow-up of reduction of Theorem~\ref{thm:gen red} is exponential in the atom dimension
of input, but polynomial in the number of orbits in $B$ and $C$.
Likewise is the number and size of finite systems of equations that are produced in the procedure of 
Theorem~\ref{thm:fin-solv}.
Summing up, the combined algorithm for \solvname($\Q$) produces exponentially many 
finite systems of exponential size
(polynomially many finite systems of polynomial size, when atom dimension of input is fixed), 
and answers positively exactly when all these systems are solvable.

In the two most significant special cases, namely $\Q = \Rat$ or $\Q = \Int$, finite systems are solvable in \PTIME.
Therefore, the problems \solvname($\Rat$) and \solvname($\Int$) are in \ETIME, and likewise
are \finsolvname($\Rat$) and \finsolvname($\Int$).
When atom dimension of input is fixed, all these problems are in \PTIME.




\section{Final remarks}

We have shown decidability of solvability of orbit-finite systems of linear equations over an arbitrary effective commutative ring.
We expect applicability of this general result in various corners; as a first example, 
combining our result with the insight of~\cite{HR21} leads to decidability of rechability
in integer-relaxation of data-enriched Petri nets. 

We leave a lot of questions for further research---here we list the most important ones.
First, the immediate next step is to compute the whole solution sets represented, for instance, as a (coset of)
an orbit-finitely spanned vector subspace.
Second, an intriguing open question is whether solvability is still decidable if the finite-support restriction on solutions is dropped
(like in~\cite{DBLP:conf/lics/KlinKOT15})?
Furthermore, an important restriction on solutions is \emph{nonnegativity}, as it allows to model systems of inequalities.
According to our preliminary results \finsolvname($\Rat$), \finsolvname($\Int$) and \solvname($\Rat$) 
are decidable under the nonnegativity restriction, 
but we don't know the status of  \solvname($\Int$).
Finally, in this paper we have exclusively considered equality atoms and are very curious about other richer structures.
For instance, concerning ordered atoms, the results of~\cite{HL18} indicate a huge increase of complexity of
 solvability,  compared to equality atoms.



\newpage
\bibliographystyle{ACM-Reference-Format}
\bibliography{bib}


\appendix


\section{Proofs missing in Sections~\ref{sec:bases}--\ref{sec:fin-solv}}

\medskip

\claimOrbitSup*
\begin{claimproof}
\toremove{
Fix some element $x\in O$ with $\supp x = S$ and consider $\pi(x)$ for all $\pi\in\Aut{}$, 
thus ranging over all elements of $O$.
By the definition of support, if $\pi$ and $\pi'$ agree on $S$ then $\pi(x) = \pi'(x)$.
Under the condition $\supp{\pi(x)} = S$, i.e.~$\pi(S) = S$ (by Claim~\ref{claim:pisup}), 
there are only $\size{S}!$ different possibilities for $\pi$ restricted to $S$, and hence
at most that many different elements $\pi(x)$.
}
\end{claimproof}

\medskip

\claimOrbitFin*
\begin{claimproof}
\toremove{
Consider an $S$-orbit $O$ and some element $x\in O$.
If $\supp x \subseteq S$ then every $S$-atom automorphism $\pi\in\Aut S$ preserves $x$, $\pi(x) = x$, and hence
$O = \set{x}$.
Otherwise, choose any $\a \in \supp x\setminus S$ and consider, 
for each $\b\in \A\setminus (\supp x\cup S)$, some arbitrary $S$-atom automorphisms $\pi_\b$ 
that map $\a$ to $\b$ and preserves $\supp x \setminus\set{\a}$.
By Claim~\ref{claim:pisup}, $\supp{\pi_\b(x)} \neq \supp{\pi_\g(x)}$ for $\b\neq\g$, 
which implies $\pi_\b(x) \neq \pi_\g(x)$ for $\b\neq\g$.
Therefore  $O$ is infinite.
}
\end{claimproof}

\medskip

\claimTightIso*
\begin{claimproof} 
\toremove{
Given an $S$-orbit $O$,
the mapping $x\mapsto (x,S)$ is the required $S$-supported bijection between
$O$ and the tight $S$-orbit $\setof{(x,S)}{x\in O}$.
}
\end{claimproof}

\medskip

\claimLocSolvDec*
\begin{claimproof}
\toremove{
Consider an instance $(V, P, \vr t)$. Let $k$ be the atom dimension of $V$ and let $T = \supp P \cup \supp {\vr t}$.
The set of all $k$-sets $A\in\utu\A k$ splits into finitely many $T$-orbits (exponentially many
with respect to $k$),
and for two such $k$-sets in the same $T$-orbit the resulting restrictions are also in the same $T$-orbit.
Therefore the set of $A$-restrictions of the instance, for all $k$-sets $A$, 
splits also into finitely many $T$-orbits.
To check local solvability it is enough to checking solvability of a representative of each $T$-orbit, 
i.e., solvability of a finite number of finite systems of linear equations.
}
\end{claimproof}



\medskip

\claimDeltat*
\begin{claimproof}
Let $\vr u = \main{\vr t} \in \main V$.
We use local solvability of the instance $(V, P', \vr t)$: for every $k$-set $A\subseteq \A$, 
\begin{align} \label{eq:u}
\restr{\vr u} {\otu A k} \in \Span {} {\setof{\restr {\vr w} {\otu A k}}{\vr w \in \main {(P')}}}.
\end{align}
%
We consider below only these finitely many subsets $A$ for which $\otu A k \cap \dom{\vr u} \neq \emptyset$.
In consequence of~\eqref{eq:u}, and because the mapping 
$\restr {\vr w}{\otu A k} \mapsto \gcog {\restr {\vr w}{\otu A k}}{\sigma^{\prec_0}_A}$ is linear, for every such $A$ we have:
\[
\gcog {\restr{\vr u} {\otu A k}} {\sigma^{\prec_0}_A} \in 
\Span {} {\setof{\gcog{\restr {\vr w} {\otu A k}}{\sigma^{\prec_0}_A}}{\vr w \in \main {(P')}}}.
\]
As $S\cap \supp{\vr t} = \emptyset$,
we have $S\cap \supp{\vr u} = \emptyset$ and hence we know that all considered subsets $A$ satisfy 
$A\subseteq \A \setminus S$.
We can thus apply Claim~\ref{claim:coglin} to
all $\vr w$ involved in the linear combination above, thus obtaining:
\[
\gcog {\restr{\vr u} {\otu A k}} {\sigma^{\prec_0}_A} \in \Span {} {P'}.
\]
Finally, the vector $\Deltawp 0 \vr u$, being a finite sum of cogs of the form $\gcog {\restr{\vr u} {\otu A k}} {\sigma^{\prec_0}_A}$, 
for finitely many subsets $A$ for which $\otu A k \cap \dom{\vr u} \neq \emptyset$,
is also in $\Span {} {P'}$, as required.
\end{claimproof}

\end{document}